\def\cutwspace{\setlength{\parskip}{0pt}\setlength{\itemsep}{1pt}}
\newcommand{\progstyle}[1]{ \begin{centering}
\prog{xxxx\=xxxx\=xxxx\=xxxx\=xxxx\=xxxx\= \kill #1}
 \end{centering} }
\newcommand{\prog}[1]{ \begin{centering} \fbox{\ \ \parbox{0pt}{ \begin{tabbing}
#1 \end{tabbing}}\ \ \ } \end{centering}  }
\newcommand{\nnl}{\\[-0.04cm]}
\newtheorem{definition}{Definition}
\newtheorem{theorem}{Theorem}
\newtheorem{lemma}{Lemma}
\begin{document}

\title{Communication-Based Decomposition Mechanisms \\ for Decentralized MDPs}

\author{\name Claudia V. Goldman \email c.goldman@samsung.com \\
        \addr Samsung Telecom Research Israel \\
         Yakum, Israel
\AND
\name Shlomo Zilberstein \email shlomo@cs.umass.edu\\
        \addr Department of Computer Science\\ 
         University of Massachusetts, Amherst, MA 01003 USA
}
\maketitle

\begin{abstract}
Multi-agent planning in stochastic environments can be framed formally as a decentralized 
Markov decision problem. Many real-life distributed problems that arise in manufacturing, multi-robot 
coordination and information gathering scenarios can be formalized using this
framework. However, finding the optimal solution in the general case is hard, 
limiting the applicability of recently developed algorithms. 
This paper provides a practical approach for solving decentralized control problems when 
communication among the decision makers is possible, but costly.
We develop the notion of communication-based mechanism that allows us to
decompose a decentralized MDP into multiple single-agent problems. In this framework, referred to 
as decentralized semi-Markov decision process with direct communication (Dec-SMDP-Com), agents operate 
separately between communications. We show that finding an optimal 
mechanism is equivalent to solving optimally a Dec-SMDP-Com. We also provide a heuristic search 
algorithm that converges on the optimal decomposition. 
Restricting the decomposition to some specific types of local 
behaviors reduces significantly the complexity of planning. 
In particular, we present a polynomial-time algorithm for the case in which individual agents perform
goal-oriented behaviors between communications.  The paper concludes 
with an additional tractable algorithm that enables the introduction of
human knowledge, thereby reducing the overall problem to 
finding the best time to communicate.  Empirical results show that these approaches provide 
good approximate solutions.
\end{abstract}

\section{Introduction}

The decentralized Markov decision process has become a common formal tool to 
study multi-agent planning and control from a decision-theoretic perspective~\cite{Bernstein02,Goldman04d,Guestrin02,Guestrin01,Nair03,Petrik07,Kaelbling00}.
Seuken and Zilberstein \citeyear{Seuken08} provide a comprehensive comparison of the
existing formal models and algorithms. Decentralized MDPs complement existing 
approaches to coordination of multiple agents based on on-line learning and 
heuristic approaches~\cite{Wolpert99,Schneider99,Ping01,Mohammed04,Nair04}.

Many challenging real-world problems can be formalized as instances of decentralized MDPs. In 
these problems, exchanging information constantly between the decision makers is 
either undesirable or impossible. Furthermore, 
these processes are controlled by a group of decision makers that must act based on different 
partial views of the global state.  Thus, a centralized approach to action selection is infeasible.
For example, exchanging information with a single central controller can lead to saturation of the 
communication network. Even when the transitions and observations of the agents 
are independent, the global problem may not 
decompose into separate, individual problems, thus a simple parallel algorithm may not be sufficient.
Choosing different local behaviors could lead to different global
rewards. Therefore, agents may need to exchange information periodically and revise their 
local behaviors.  One important point to understand the model we propose is that although
eventually each agent will behave following some local behavior, choosing among possible 
behaviors requires information from other agents.  We focus on situations in which this information 
is not freely available, but it can be obtained via communication.

Solving optimally a general decentralized control problem has been shown to be 
computationally hard~\cite{Bernstein02,Tambe02}.  In the worst case, the general problem requires
a double-exponential algorithm\footnote{Unless NEXP is different from EXP, we cannot prove 
the super-exponential complexity. But, it is generally believed
that NEXP-complete problems require double-exponential time to solve optimally.}. 
This difficulty is due to two main reasons: 1) none of the 
decision-makers has full-observability of the global system and 2) the global performance of 
the system depends on a global reward, which is affected by the agents' behaviors.
In our previous work~\cite{Goldman04c}, we have studied the complexity of solving optimally 
certain classes of Dec-MDPs and Dec-POMDPs\footnote{In Dec-MDPs, the observations of all the agents 
are sufficient to determine the global state, while in Dec-POMDPs the global state cannot be fully determined by the observations.}. 
For example, we have shown that decentralized problems with independent transitions and observations are
considerably easier to solve, namely, they are NP-complete. 
Even in these cases, agents' behaviors can be dependent through the global
reward function, which may not decompose into separate local reward functions.
The latter case has been studied within the context of auction mechanisms for
weakly coupled MDPs by Bererton et al.~\citeyear{Bererton03}.
In this paper, the solution to this type of more complex decentralized 
problems includes temporally abstracted actions combined with communication actions. 
Petrik and Zilberstein~\citeyear{Petrik07} have recently presented
an improved solution to our previous Coverage Set algorithm~\cite{Goldman04d}, 
which can solve decentralized problems optimally. However, the technique is
only suitable when no communication between the agents is possible.
Another recent study by Seuken and Zilberstein~\citeyear{Seuken07a,Seuken07b} 
produced a more general approximation technique based on dynamic programming 
and heuristic search. While the approach shows better scalability, it remains
limited to relatively small problems compared to the decomposition method 
presented here.

We propose an approach to approximate the optimal solutions of decentralized 
problems off-line. The main idea is to compute multiagent macro actions that 
necessarily end with communication. Assuming that communication incurs some
cost, the communication policy is computed optimally, that is the algorithms 
proposed in this paper will compute the best time for the agents to exchange 
information. At these time points, agents attain full knowledge of the current global 
state. These algorithms also compute for each agent what domain actions to 
perform between communication, these are 
temporally abstracted actions that can be interrupted at any time. 
Since these behaviors are computed for each agent separately and independently 
from each other, the final complete solution of communication and action 
policies is not guaranteed to be globally optimal. We refer to this 
approach as a communication-based decomposition mechanism: the algorithms 
proposed compute mechanisms to decompose the global behavior of the agents 
into local behaviors that are coordinated by communication. 
Throughout the paper, these algorithms differ in the space of behaviors in which they search: 
our solutions range from the most general search space available (leading to the optimal mechanism) 
to more restricted sets of behaviors. 

The contribution of this paper is to provide a tractable method, namely 
communication-based decomposition mechanisms, to solve decentralized problems, for which no
efficient algorithms currently exist. For general decentralized problems, our approach
serves as a practical way to approximate the solution in a systematic way. We also provide
an analysis about the bounds of these approximations when the local transitions are not 
independent. For specific cases, like those with independent transitions and observations, we show how to 
compute the optimal decompositions into local behaviors and optimal policies of communication to coordinate
the agents' behaviors at the global level.

Section~\ref{mechanisms} introduces the notion of communication-based mechanisms.
We formally frame this approach as a decentralized semi-Markov decision process with direct 
communication (Dec-SMDP-Com) in Section~\ref{MDPOPT}. 
Section~\ref{PolicyIT} presents the decentralized multi-step backup policy-iteration algorithm that returns
the optimal decomposition mechanism when no restrictions are imposed on the individual behaviors
of the agents. Due to this generality, the algorithm is applicable in some limited domains. 
Section~\ref{DecSMDPLocalGoals} presents a more practical solution, considering that each agent
can be assigned local goal states. 
Assuming local goal-oriented behavior reduces the complexity of the problem to polynomial in the number of 
states. Empirical results (Section~\ref{ProdExp}) support these claims.
Our approximation mechanism can also be applied when the range of possible local behaviors are 
provided at design time. Since these predetermined local behaviors alone may not be sufficient to
achieve coordination, agents still need to decide when to communicate.
Section~\ref{greedy} presents a polynomial-time algorithm that computes the policy 
of communication, given local policies of domain actions. The closer the human-designed local plans are 
to local optimal behaviors, the closer our solution will be to the optimal joint solution. 
Empirical results for the Meeting under Uncertainty scenario (also known as the
Gathering Problem in robotics, \citeauthor{Suzuki99}, 1999) are presented in Section~\ref{example}. 
We conclude with a discussion of the contributions of this work in Section~\ref{discussion}.

\section{The Dec-MDP model}

Previous studies have shown that decentralized MDPs in 
general are very hard to solve optimally and off-line even when direct communication is 
allowed~\cite{Bernstein02,Tambe02,Goldman04c}. 
A comprehensive complexity analysis of solving optimally decentralized control 
problems revealed the sources of difficulty in solving these problems~\cite{Goldman04c}.
Very few algorithms were proposed that can actually solve some classes of problems optimally and
efficiently.

We define a general underlying process which allows agents to exchange messages directly with each other
as a decentralized POMDP with direct communication:
\begin{definition}[Dec-POMDP-Com]
\label{DecPOMDPComDef}
A decentralized partially-observable Markov decision process with direct communication, Dec-POMDP-Com
is given by the following tuple: \newline
$\overline{M}=<S,A_1,A_2,\Sigma,C_\Sigma,P,R,\Omega_1,\Omega_2,O,T>$, where
\begin{itemize}
\setlength{\itemsep}{-2pt}
\item $S$ is a finite set of world states, that are factored and include a distinguished initial state 
$s^0$.
\item $A_1$ and $A_2$ are finite sets of actions. $a_{i}$ denotes the 
action performed by agent $i$. 
\item $\Sigma$ denotes the alphabet of messages and $\sigma_i \in \Sigma$ represents an atomic 
message sent by agent $i$ (i.e., $\sigma_i$ is a letter in the language).
\item $C_{\Sigma}$ is the cost of transmitting an atomic message: $C_{\Sigma}: 
\Sigma \rightarrow \Re$. The cost of transmitting a null message is zero.
\item $P$ is the transition probability function.
$P(s'|s,a_1,a_2)$ is the probability of moving from state $s\in S$ to state 
$s'\in S$ when agents $1$ and $2$ perform actions $a_1$ and $a_2$ respectively.
This transition model is stationary, i.e., it is independent of time.
\item $R$ is the global reward function. $R(s,a_1,a_2,s')$
represents the reward obtained by the system as a whole, when agent $1$
executes action $a_1$ and agent $2$ executes action $a_2$ in state $s$ 
resulting in a transition to state $s'$.
\item $\Omega_1$ and $\Omega_2$ are finite sets of observations.
\item $O$ is the observation function. $O(o_1,o_2|s,a_1,a_2,s')$ is the
probability of observing $o_1$ and $o_2$ (respectively by the two agents) when 
in state $s$ agent $1$ takes action $a_1$ and agent $2$ takes action $a_2$, 
resulting is state $s'$. 
\item If the Dec-POMDP has a finite horizon, it is represented by a positive
integer $T$. The notation $\tau$ represents the set of discrete time
points of the process.
\end{itemize}
\end{definition}

The optimal solution of such a decentralized problem is a joint policy
that maximizes some criteria--in our case, the expected accumulated reward
of the system. A joint policy is a tuple composed of local policies for each
agent, each composed of a policy of action and a policy of communication:
i.e., a joint policy $\delta=(\delta_1,\delta_2)$, where
$\delta_i^A:\Omega^*_i \times \Sigma^* \rightarrow A_i$ and 
$\delta_i^\Sigma:\Omega^*_i \times \Sigma^* \rightarrow \Sigma$. 
That is, a local policy of action assigns an action to any possible sequence 
of local observations and messages received. A local policy of communication 
assigns a message to any possible sequence of observations and messages 
received. In each cycle, agents can perform a domain action, then perceive 
an observation and then can send a message.

We assume that the system has independent observations and transitions (see Section~\ref{LGO-nonindependent}
for a discussion on the general case).
Given factored system states $s=(s_1,s_2)\in S$, the domain actions $a_i$ 
and the observations $o_i$ for each agent, the formal definitions\footnote{These definitions are based on Goldman and Zilberstein~\citeyear{Goldman04c}.  We include them here to make the paper self-contained.} for decentralized 
processes with independent transitions, and observations follow. We note that this
class of problems is not trivial since the reward of the system is not necessarily
independent.  For simplicity, we present our definitions for the case of two agents. However, the approach 
presented in the paper is applicable to systems with $n$ agents.

\begin{definition}[A Dec-POMDP with Independent Transitions]
\label{ITDef}
A Dec-POMDP has independent transitions if the set $S$ of states can be 
factored into two components $S = S_1 \times S_2$ such that:
\[\forall s_1,s'_1\!\in\!S_1,\forall s_2,s'_2\!\in\!S_2,\forall a_1\!\in\!A_1,\forall a_2\!\in\!A_2,\]
\[\ \ \ \ \ \ \  Pr(s'_1|(s_1,s_2),a_1,a_2,s'_2) = Pr(s'_1|s_1,a_1) ~\wedge\]
\[\ \ \ \ \ \ \  Pr(s'_2|(s_1,s_2),a_1,a_2,s'_1) = Pr(s'_2|s_2,a_2).\]
In other words, the transition probability $P$ of the Dec-POMDP can be
represented as \newline $P=P_1\cdot P_2$, where 
$P_1=Pr(s'_1|s_1,a_1)$ and $P_2=Pr(s'_2|s_2,a_2)$.
\end{definition}

\begin{definition}[A Dec-POMDP with Independent Observations]
\label{IODef}
A Dec-POMDP has independent observations if the set $S$ of states can be 
factored into two components $S = S_1 \times S_2$ such that:
\[\forall o_1\!\in\!\Omega_1,\forall o_2\!\in\!\Omega_2,\forall s\!=\!(s_1,s_2),s'\!=\!(s'_1,s'_2)\in S,\forall a_1\!\in\!A_1,\forall a_2\!\in\!A_2,\]
\[\ \ \ \ \ \ \  Pr(o_1|(s_1,s_2),a_1,a_2,(s'_1,s'_2),o_2) = Pr(o_1|s_1,a_1,s'_1) \wedge\]
\[\ \ \ \ \ \ \  Pr(o_2|(s_1,s_2),a_1,a_2,(s'_1,s'_2),o_1) = Pr(o_2|s_2,a_2,s'_2) \]
\[O(o_1,o_2|(s_1,s_2),a_1,a_2,(s'_1,s'_2)) = Pr(o_1|(s_1,s_2),a_1,a_2,(s'_1,s'_2),o_2)\cdot Pr(o_2|(s_1,s_2),a_1,a_2,(s'_1,s'_2),o_1).\]
In other words, the observation probability $O$ of the Dec-POMDP 
can be decomposed into two observation probabilities $O_1$ and $O_2$, such that
$O_1=Pr(o_1|(s_1,s_2),a_1,a_2,(s'_1,s'_2),o_2)$ and $O_2=Pr(o_2|(s_1,s_2),a_1,a_2,(s'_1,s'_2),o_1)$.
\end{definition}

\begin{definition}[Dec-MDP]
\label{DecMDPDef}
A decentralized Markov decision process (Dec-MDP) is a Dec-POMDP, which is jointly fully
observable, i.e., the combination of both agents' observations determine the global
state of the system.
\end{definition}
In previous work~\cite{Goldman04c}, we proved that Dec-MDPs with independent transitions and observations are locally fully-observable. In particular, we showed that exchanging the last observation is sufficient to obtain complete 
information about the current global state and it guarantees optimality of the solution.

We focus on the computation of the individual behaviors of the 
agents taking into account that they can exchange information from time to 
time. The following sections present the communication-based decomposition 
approximation method to solve Dec-MDPs with direct communication and independent transitions 
and observations. 

\section{Communication-based Decomposition Mechanism}
\label{mechanisms}

We are interested in creating a mechanism that will tell us what individual behaviors are the most 
beneficial in the sense that these behaviors taken jointly will result in a good approximation
of the optimal decentralized solution of the global system. Notice that even when the system has
a global objective, it is not straightforward to compute the individual behaviors. The decision problem 
that requires the achievement of some global objective does not tell us which local goals each decision 
maker needs to reach in order to maximize the value of a joint policy that reaches the {\em global} objective. 
Therefore, we propose communication-based decomposition mechanisms as a practical 
approach for approximating the optimal joint policy of decentralized control problems.
Our approach will produce two results: 1) a set of temporarily abstracted actions 
for each global state and for each agent and 2) a policy of communication, aimed at synchronizing the
agents'  partial information at the time that is most beneficial to the system.

Formally, a communication-based decomposition mechanism $CDM$ is a function from any 
global state of the decentralized problem to two single agent behaviors or policies: $CDM: S \rightarrow (Opt_1,Opt_2)$.
In general, a mechanism can be applied to systems with $n$ agents, in which case the decomposition of the decentralized process will be into $n$ individual behaviors.
In order to study communication-based mechanisms, we draw an analogy between  
temporary and local policies of actions and options.
Options were defined by Sutton et al.~\citeyear{Sutton99} as temporally abstracted actions,
formalized as triplets including a stochastic single-agent policy, a termination condition, and a 
set of states in which they can be initiated:
$opt=<\!\pi:S\times A \rightarrow [0,1],\beta:S^+\rightarrow[0,1],I\subseteq S\!>$.
An option is available in a state $s$ if $s\in I$. 

Our approach considers options with {\em terminal actions} (instead of terminal states). 
Terminal actions were also considered by Hansen and Zhou~\citeyear{Hansen03} in the framework of 
indefinite POMDPs. We denote the domain actions of agent $i$ as $A_i$. The set of terminal 
actions only includes the messages in $\Sigma$.
For one agent, an option is given by the 
following tuple: $opt_i=<\pi:S_i \times \tau\rightarrow A_i\bigcup \Sigma,I\subseteq S_i>$, 
i.e., an option is a  non-stochastic policy from the agent's partial view (local states) and time
to the set of its primitive domain actions and terminal actions. 
The local states $S_i$ are given by the factored representation of the Dec-MDP with independent
transitions and observations. Similarly, the transitions between local states are known since
$P(s'|s,a_1,a_2)=P_1(s'_1|s_1,a_1)\cdot P_2(s'_2|s_2,a_2)$.

In this paper, we concentrate on terminal actions that are necessarily communication actions. 
We assume that all options are terminated whenever at least one of the agents 
initiates communication (i.e., the option of the message sender terminates when it 
communicates and the hearer's option terminates due to this external event).
We also assume that there is joint exchange of messages, i.e., whenever one agent initiates
communication, the global state of the system is revealed to all the agents receiving those
messages: when agent $1$ sends
its observation $o_1$ to agent $2$, it will also receive agent $2$'s observation $o_2$.
This exchange of messages will cost the system only once. Since we focus on finite-horizon processes, 
the options may also be artificially terminated if the time limit of the problem is reached.
The cost of communication $C_\Sigma$ may include, in addition to the actual transmission cost, the cost 
resulting from the time it takes to compute the agents' local policies.

Communication-based decomposition mechanisms enable the agents to operate separately for certain periods 
of time. The question, then, is how to design mechanisms that will approximate 
best the optimal joint policy of the decentralized problem. We distinguish between three cases: 
general options, restricted options, and predefined options.

General options are built from any primitive domain action and communication action 
given by the model of the problem. Searching 
over all possible pairs of local single-agent policies and communication policies built from these
general options will lead to the best approximation. It is obtained when we compute the optimal mechanism 
among all possible mechanisms.  Restricted options limit the space of feasible options to a much smaller set 
defined using certain behavior characteristics.  Consequently, we can obtain mechanisms with lower complexity. Such tractable mechanisms provide
approximation solutions to decentralized problems for which no efficient algorithms currently exist.
Obtaining the optimal mechanism for a certain set of restricted 
options (e.g., goal-oriented options) becomes feasible, as we show in
Sections~\ref{MDPOPT}-\ref{DecSMDPLocalGoals}. 
Furthermore, sometimes, we may consider options that are pre-defined. 
For example, knowledge about effective individual procedures may already exist.
The mechanism approach allows us to combine 
such domain knowledge into the solution of a decentralized problem. In such situations,
where a mapping between global states and single-agent behaviors already exists, the 
computation of a mechanism returns the policy of communication at the meta-level of 
control that synchronizes the agents' partial information. 
In Section~\ref{greedy},  we study a greedy approach 
for computing a policy of communication when knowledge about local behaviors is given.

Practical concerns lead us to the study of communication-based decomposition mechanisms. 
In order to design applicable mechanisms, two desirable properties need to be considered:
\vspace{-8pt}
\begin{itemize}
\setlength{\itemsep}{0pt}
\item {\bf Computational complexity} --- The whole motivation behind the mechanism approach 
is based on the idea that the mechanism itself has low computational complexity. 
Therefore, the computation of the $CDM$ 
mapping should be practical in the sense that individual behaviors of each agent will have 
complexity that is lower than the 
complexity of the decentralized problem with free communication. 
There is a trade-off between the complexity of computing a mechanism and the global 
reward of the system. There may not be a simple way to split the decentralized process into 
separate local behaviors. The complexity characteristic should be taken into account
when designing a mechanism; different mechanisms can be computed at different 
levels of difficulty.

\item {\bf Dominance} --- A mechanism $CDM_1$ 
dominates another mechanism $CDM_2$ if the global reward attained by $CDM_1$ with some policy
of communication is larger than the global reward attained by $CDM_2$ with 
any communication policy. A mechanism is {\em optimal} for a certain problem 
if there is no mechanism that dominates it.
\end{itemize}

\section{Decentralized Semi-Markov Decision Problems}
\label{MDPOPT}

Solving decentralized MDP problems with a communication-based decomposition mechanism
translates into computing the set of individual and temporally abstracted actions that each
agent will perform together with a policy of  communication that stipulates when to
exchange information. Hereafter, we show how the problem of computing a mechanism can be 
formalized as a semi-Markov decision problem. In particular, the set of basic actions of this
process is composed of the temporally abstracted actions together with the communication actions.
The rest of the paper presents three algorithms aimed at solving this semi-Markov problem optimally.
The algorithms differ in the sets of actions available to the decision-makers, affecting 
significantly the complexity of finding the decentralized solution. 
It should be noted that the optimality of the
mechanism computed is conditioned on the assumptions of each algorithm (i.e., the first algorithm provides the optimal mechanism over all possible options, the second algorithm provides the optimal mechanism when local goals are assumed, and the last algorithm computes the optimal policy of communication assuming that the local behaviors are given).
Formally, a decentralized semi-Markov decision problem with direct communication 
(Dec-SMDP-Com) is given as follows:

\begin{definition}[Dec-SMDP-Com]
\label{Dec-SMDP-Com}
A factored, finite-horizon Dec-SMDP-Com over an underlying Dec-MDP-Com 
$\overline{M}$ is a tuple \newline
$<\overline{M},Opt_1,Opt_2,P^N,R^N>$ where:
\begin{itemize}
\setlength{\itemsep}{0pt}
\item $S$, $\Sigma$, $C_\Sigma$, $\Omega_1$, $\Omega_2$, $P$, $O$ and $T$ are components of the underlying
process $\overline{M}$ defined in definitions~\ref{DecMDPDef} and \ref{DecPOMDPComDef}.
\item $Opt_i$ is the set of actions available to agent $i$. It comprises the possible options that 
agent $i$ can choose to perform, which terminate necessarily with a communication act:\newline
$opt_i=<\pi:S_i \times \tau \rightarrow A_i\bigcup \Sigma,I\subseteq S_i>$.
\item $P^N(s',t\!+\!N|s,t,opt_1,opt_2)$ is the probability of the 
system reaching state $s'$ after exactly $N$ time units, 
when at least one option terminates (necessarily with a communication act).
This probability function is given as part of the model for every value of $N$, such
that $t\!+\!N\leq T$.
In this framework, after $N$ time steps at 
least one agent initiates communication (for the first time since time $t$) and this 
interrupts the option of the hearer agent. 
Then, both agents get full observability of the synchronized state. Since the 
decentralized process has independent transitions and observations, 
$P^N$ is the probability that either agent has communicated or both of them have.
The probability that agent $i$ terminated its option exactly at time $t+N$, $P^{N}_i$, is given as follows:
\begin{small}
\[P^{N}_i(s'_i,t\!+\!N|s_i,t,opt_i) = \left\{ \begin{array}{ll}
                       1  &\hskip-8cm\mbox{ if $(\pi_{opt_i}(s_i,t)\in \Sigma)\wedge (N\!=\!1)\wedge (s'_i\!=\!s_i))$}\\
                       0  &\hskip-8cm\mbox{ if $(\pi_{opt_i}(s_i,t)\in \Sigma)\wedge (N\!=\!1)\wedge (s'_i\!\neq\!s_i))$}\\
                       0  &\hskip-8cm\mbox{ if  $(\pi_{opt_i}(s_i,t)\in A)\wedge (N\!=\!1))$}\\
                       0  &\hskip-8cm\mbox{ if $(\pi_{opt_i}(s_i,t)\in \Sigma)\wedge (N\!>\!1))$}\\
\ & \\
\                         &\hskip-8cm\mbox{ if $(\pi_{opt_i}(s_i,t)\in A)\wedge(N\!>\!1))$}\\
                       \sum_{q_i\in S_i} P_i(q_i|s_i,\pi_{opt_{i}}(s_i,t))P^{N}_i(s'_i,(t\!+\!1)\!+\!(N\!-\!1)|q_i,t\!+\!1,opt_i)\\

                     \end{array}
            \right. \]
\end{small}
The single-agent probability is one when the policy of the option instructs the agent
to communicate (i.e., $\pi_{opt_i}(s_i,t) \in \Sigma$), in which case the local process remains in the same
local state.

We use the notation $s=(s_1,s_2)$ and $s'=(s'_1,s'_2)$ to refer to each agent's local state.
Then, we denote by $\overline{P}^N_i(s'_i,t\!+\!N|s_i,t,opt_i)$ the probability that agent $i$
will reach state $s'_i$ in $N$ time steps when it follows the option $opt_i$. It refers to the probability of reaching some state $s'_i$ without
having terminated the option necessarily when this state is reached. This
transition probability can be computed recursively since the transition probability of 
the underlying Dec-MDP is known: 
\begin{small}
\[\overline{P}^N_i(s'_i,t\!+\!N|s_i,t,opt_i)=\left\{ \begin{array}{ll}
P_i(s'_i|s_i,\pi_{opt_{i}}(s_i,t))) &\hskip-4cm \mbox{if $N\!=\!1$} \\
 \ &\hskip-4cm \mbox{otherwise} \\
\sum_{q_i\in S_i}P(s'_i|q_i,\pi_{opt_{i}}(q_i,t))\overline{P}^N_i(s'_i,(t\!+\!1)\!+\!(N\!-\!1)|s_i,t\!+\!1,opt_i) & \ \\
\end{array}
\right. \]
\end{small}

Finally, we obtain that:
\[P^N(s',t\!+\!N|s,t,opt_1,opt_2)=P^N_1(s'_1,t\!+\!N|s_1,t,opt_1)\cdot \overline{P}^N_2(s'_2,t\!+\!N|s_2,t,opt_2)+\]
\[\ \ \ \ \ \ \ \ P^N_2(s'_2,t\!+\!N|s_2,t,opt_2)\cdot \overline{P}^N_1(s'_1,t\!+\!N|s_1,t,opt_1)\]
\[-P^N_1(s'_1,t\!+\!N|s_1,t,opt_1)\cdot P^N_2(s'_2,t\!+\!N|s_2,t,opt_2)\]

\item $R^N(s,t,opt_1,opt_2,s',t\!+\!N)$ is the expected reward obtained  by the system $N$ 
time steps after the agents started options $opt_1$ and $opt_2$ respectively in state $s$ 
at time $t$, when at least one of them has terminated its option with a communication act
(resulting in the termination of the other agent's option). This reward is computed for $t\!+\!N\leq T$.

\[R^N(s,t,opt_1,opt_2,s',t\!+\!N)= \left\{ \begin{array}{ll}
\overline{C}(opt_1,opt_2,s,s',N)       &\mbox{if $t\!+\!N=T$}\\
\overline{C}(opt_1,opt_2,s,s',N)+C_\Sigma       &otherwise\\
                     \end{array}
            \right. \]

$\overline{C}(opt_1,opt_2,s,s',N)$ is the expected cost incurred by the system when it
transitions between states $s$ and $s'$ and at least one agent communicates after
$N$ time steps.
We define the probability of a certain sequence of global states being transitioned by the
system when each agent follows its corresponding option as $P(<\!s^0,s^1,\ldots,s^N\!>)$:
\[P(<\!s^0,s^1,\ldots,s^N\!>)=\alpha \prod_{j\!=\!0}^{N\!-\!1}P(s^{j\!+\!1}|s^j,\pi_{opt_{1}}(s^j_1),\pi_{opt_{2}}(s^j_2))\]

$\alpha$ is a normalizing factor that makes sure that over all possible sequences, the
probability adds up to one for a given $s^0$, $s^N$ and $N$ steps going through
intermediate steps $s^1,\ldots,s^{N-1}$. 
Then, we denote by $R_{seq}$ the reward attained by the system when it traverses a certain sequence of states.
Formally, $R_{seq}(<\!s^0,\ldots,s^N\!>)=\sum_{j\!=\!0}^{N\!-\!1} R(s^j,\pi_{opt_{1}}(s^j_1),\pi_{opt_{2}}(s^j_2),s^{j\!+\!1})$ where $\pi_{opt_{i}}(s^j_i))$ refers to the primitive action that is chosen by the option at the local state $s^j_i$.
Finally, we can define the expected cost $\overline{C}(opt_1,opt_2,s,s',N)$ as follows:
\[\overline{C}(opt_1,opt_2,s,s',N)=\sum_{q^1,\ldots,q^{N\!-\!1}\in S} P(<\!s,q^1,\ldots,q^{N\!-\!1},s'\!>)R_{seq}(<\!s,q^1,\ldots,q^{N\!-\!1},s'\!>)\]
\end{itemize}
\end{definition}

The dynamics of a semi-Markov decentralized process are as follows. Each agent performs its option 
starting in some global state $s$ that is fully observed. Each agent's option is a mapping from
local states to actions, so agent $i$ starts the option in state $s_i$ at time $t$ until it 
terminates in some state $s'_i$, $k$ time steps later. 
Whenever the options are terminated, the agents can fully observe the global state due to
the terminal communication actions. If they reach 
state $s'$ at time $t\!+\!k\!<\!T$, then the joint policy chooses a possible different pair of 
options at state $s'$ at time $t\!+\!k$ and the process continues. 

Communication in our model leads to a joint exchange of messages. Therefore all the agents observe the 
global state of the system once information is exchanged. This means that all those
states of the decentralized semi-Markov process are {\em fully-observable} (as opposed 
to jointly fully-observable states as in the classical Dec-MDP-Com).

The local policy for agent $i$ in the Dec-SMDP-Com is a mapping from the global
states to its options (as opposed to a mapping from sequences of observations as in the
general Dec-POMDP case, or a mapping from a local state as in the Dec-MDPs with 
independent transitions and observations):
\[\delta_i:S \times \tau \rightarrow Opt_i\]
A joint policy is a tuple of local policies, one for each agent, i.e., a joint policy instructs
each agent to choose an option in each global state. Thus, solving for an optimal mechanism 
is equivalent to solving optimally a decentralized semi-Markov decision problems with temporally 
abstracted actions.

\begin{lemma}
\label{MMDPlemma}
A Dec-SMDP-Com is equivalent to a multi-agent MDP.
\end{lemma}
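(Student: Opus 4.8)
The plan is to exhibit an explicit multi-agent MDP whose states, joint actions, transition function and reward function are read off directly from the Dec-SMDP-Com, and then argue that the two processes induce the same family of policies with identical expected accumulated reward. Concretely, I would take the MMDP state set to be $S\times\tau$ (the time index is carried along because the horizon is finite and the option-level data $P^N$ and $R^N$ are indexed by elapsed time), the joint action set at each state to be $Opt_1\times Opt_2$, the transition probability from $(s,t)$ to $(s',t')$ under $(opt_1,opt_2)$ to be $P^N(s',t{+}N\,|\,s,t,opt_1,opt_2)$ with $N=t'-t$ (and $0$ if $t'\le t$ or $t'>T$), and the one-step reward to be $R^N(s,t,opt_1,opt_2,s',t{+}N)$. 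Both of these are supplied as part of the Dec-SMDP-Com model in Definition~\ref{Dec-SMDP-Com}, so the construction is well defined.

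The substantive step is to verify that this construction really is a (multi-agent) MDP, i.e.\ that the process embedded at the decision epochs is Markov in $(s,t)$. Here is where the defining features of the model are used. First, a decision epoch in the Dec-SMDP-Com occurs precisely when some agent communicates; by assumption communication triggers a joint exchange, so at every decision epoch the global state $s'$ becomes common knowledge to both agents --- the epoch states are \emph{fully observable}, not merely jointly fully observable. Second, each option is a stationary mapping from local states (and time) to primitive or communication actions, and the underlying Dec-MDP has independent, stationary transitions and observations; therefore, given the current global state $s$, time $t$ and the chosen pair of options, the distribution over the next epoch state and elapsed time --- which is exactly $P^N$, assembled from $P^N_i$ and $\overline{P}^N_i$ --- depends on nothing earlier in the execution. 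The accumulated reward between epochs, $R^N$, is likewise a function only of $(s,t,opt_1,opt_2,s',t{+}N)$. Hence the one-step dynamics and reward at an epoch depend only on the current epoch state and the joint action, which is the Markov property, and the action space being the product $Opt_1\times Opt_2$ of per-agent option sets is exactly what makes it \emph{multi-agent}.

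It remains to match up solutions. A local policy of the Dec-SMDP-Com is by definition a map $\delta_i:S\times\tau\to Opt_i$, so a joint policy $(\delta_1,\delta_2)$ is precisely a deterministic Markov policy of the constructed MMDP that selects the joint action $(\delta_1(s,t),\delta_2(s,t))$ at state $(s,t)$; conversely every MMDP policy decomposes in this way because the MMDP action set is the product $Opt_1\times Opt_2$. Under this identification the expected accumulated reward of a joint policy in the Dec-SMDP-Com, obtained by summing $R^N$ along the random sequence of decision epochs generated by $P^N$, coincides term by term with the value of the corresponding policy in the MMDP. Consequently optimal joint policies correspond to optimal MMDP policies, which establishes the equivalence.

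The main obstacle I anticipate is not the bookkeeping of the construction but arguing cleanly that the embedded epoch process is genuinely memoryless: one has to be explicit that (i) the joint exchange of messages makes each epoch state fully observable, so there is no hidden history buried in the agents' information states, and (ii) the independence and stationarity of the underlying transitions make $P^N$ and $R^N$ functions of the current epoch data alone rather than of the path taken to reach it. Once those two points are nailed down, the remainder is a direct translation.
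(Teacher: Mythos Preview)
Your proposal is correct and takes essentially the same approach as the paper: both establish the equivalence by directly identifying the Dec-SMDP-Com components (states, options as joint actions, $P^N$, $R^N$) with those of an MMDP, relying on the joint exchange of messages to make the epoch states fully observable. You are more careful about carrying the time index into the MMDP state space and arguing the Markov property explicitly, while the paper additionally remarks on the reverse embedding (any MMDP is a Dec-SMDP-Com whose options have length two: one primitive action followed by communication); but the core argument is the same.
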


\begin{proof}
Multiagent MDPs (MMDPs) represent a Markov decision process
that is controlled by several agents~\cite{Boutilier99}. One important feature of this model
is that all the agents have a central view of the global state. Formally, MMDPs 
are tuples of the form $<Ag,\{A_i\}_{i\in Ag},S,Pr,R>$, where:
\begin{itemize}
\cutwspace
\item $Ag$ is a finite collection of $n$ agents.
\item $\{A_i\}_{i\in Ag}$ represents the joint action space. 
\item $S$ is a finite set of system states.
\item $Pr(s'|s,a_1,\ldots,a_n)$ is the transition probability between global 
states $s$ and $s'$ when the agents perform a joint action.
\item $R:S \rightarrow \Re$ is the reward that the system obtains when a global
state is reached.
\end{itemize} 

A decentralized semi-Markov problem with direct communication
can be solved optimally by solving the corresponding MMDP. For simpicity of exposition
we show the proof for systems with two agents. 
Following Definition~\ref{Dec-SMDP-Com}, a 2-agent Dec-SMDP-Com
is given by the tuple: $<\overline{M},Opt_1,Opt_2,P^N,R^N>$.
The mapping between these two models is as follows:
$Ag$ is the same finite collection of agents that control the MMDP and
the semi-Markov process. The set $S$ is the set of states of the world in 
both cases. In the MMDP model, these states are fully observable by 
definition. In the semi-Markov decentralized model these global states are
also fully observable because agents always exchange information at the
end of any option that they perform. 
The set of joint actions $\{A_i\}_{i\in Ag}$ is given in the  
semi-Markov process as the set of options available to each agent
(e.g., if $n=2$ then $\{A_i\}=\{Opt_1,Opt_2\}$). The difference is that the 
joint actions are chosen from primitive domain actions in the MMDP and the options are 
temporarily abstracted actions which terminate with a communication act. 
The probability transition and the reward functions can be easily mapped
between the models by matching $P^N$ with $Pr$ and $R^N$ with $R$.

The solution to an MMDP (or Dec-SMDP-Com) problem is a 
strategy that assigns a joint action (or a set of options) to each global state.
Solving an MMDP with actions given as options
solves the semi-Markov problem. Solving a semi-Markov problem when the options
are of length two, i.e., each option is composed of exactly one primitive action followed by
a communication action that tells the agent to communicate its observation
solves the corresponding MMDP problem.
\hfill $\Box$
\end{proof}

Solving a decentralized semi-Markov process with communication is P-complete because of Lemma~\ref{MMDPlemma}
and the polynomial complexity of single agent MDPs~\cite{Papadimitriou87}. However, the input 
to this problem not only includes the states but also a double exponential number of domain actions for each agent. As explained in the next section, 
each option can be represented as a tree, where: 1) the depth of an option is limited by the finite
horizon $T$ and 2) the branching factor of an option is constrained by the number
of states in $S$. Therefore, the maximal number of leaves an option might have
is bounded by $|S|^T$. Consequently, there can be $|A|^{|S|^T}$ assignments of
primitive domain and communication acts to  the leaves in each possible option.

The naive solution to a Dec-SMDP-Com problem is to search the space of all possible 
pairs of options and find the pair that maximizes the value of each global
state. The multi-step policy-iteration algorithm, presented in Section~\ref{PolicyIT}, 
implements a heuristic version of this search that converges to the optimal mechanism. 
The resulting search space (after pruning) can become intractable for even very simple and 
small problems. Therefore, we propose to apply communication-based decomposition mechanisms
on restricted sets of options. Solving a Dec-SMDP-Com with a restricted set of options means to 
find the optimal policy that attains the maximal value over all possible options in the restricted
set~\cite{Sutton99,Puterman94}. Sections~\ref{DecSMDPLocalGoals} and \ref{greedy} present
two additional algorithms that solve Dec-SMDP-Com problems when the options considered
are goal-oriented options, i.e., the mechanism assigns local goals to each one of the agents at each
global state, allowing them to communicate before having reached their local goals.

\section{Multi-step Backup Policy-Iteration for Dec-SMDP-Com}
\label{PolicyIT}

Solving a Dec-SMDP problem optimally means computing the optimal pair of options for each 
fully-observable global state. These options instruct the agents how to act independently 
of each other until information is exchanged. In order to find these options for 
each global state, we apply an adapted and extended version of the multi-step backup 
policy-iteration algorithm with heuristic search~\cite{HansenTechRep97}. 
We show that the decentralized version of this algorithm converges to the optimal policy of 
the decentralized case with temporally abstracted actions. 

We extend the model of the single-agent POMDP with observations costs to the
Dec-SMDP-Com model. From a global perspective, each agent that follows its own option 
without knowing the global state of the system, is following an open-loop policy.
However, locally, each agent is following an option, which does depend on the agent's 
local observations.
We first define a {\em multi-step backup} for options, when $s$ and $s'$ are global 
states of the decentralized problem: 
\begin{small}
$V(s,t,T)=$
\[\ \ \max_{opt_1,opt_2\in \mbox{OPT}_b}\{\sum_{k\!=\!1}^{min\{b,T\!-\!t\}}\sum_{s'}P^N(s',t\!+\!k|s,t,opt_1,opt_2)
[R^N(s,t,opt_1,opt_2,s',t\!+\!k)+V(s',t\!+\!k,T)]\}\]
\end{small}
$\mbox{OPT}_b$ is the set of options of length at most $b$, where the length is defined as follows:

\begin{definition} [The length of an Option]
The length of an option is $k$ if the option can perform at most $k$ domain actions
in one execution.
\end{definition}

As in Hansen's work, $b$ is a bound on the length of the options ($k\leq b$). 
Here, the finite horizon Dec-SMDP-Com case is analyzed, therefore $b\leq T$.
$P^N(s',t\!+\!k|s,t,opt_1,opt_2)$ and $R^N(s,t,opt_1,opt_2,s',t\!+\!k)$ are taken from 
the Dec-SMDP-Com model (Definition~\ref{Dec-SMDP-Com}). 

We apply the multi-step backup policy-iteration algorithm (see Figure~\ref{MSBPIalgorithm}) using 
the pruning rule introduced by Hansen~\citeyear{HansenTechRep97}, which we
adapt to work on pairs of policies instead of 
linear sequences of actions. The resulting optimal multi-step backup policy is equivalent 
to the optimal policy of the MMDP (Lemma~\ref{MMDPlemma}), i.e., it is equivalent to the optimal 
decentralized policy of a Dec-SMDP-Com with temporally abstracted actions.
In order to explain the pruning rule for the decentralized case with temporally abstracted actions, we 
define what policy-tree structures are. 

\begin{definition}[Policy-tree]
A {\em policy-tree} is a tree structure, composed of local state nodes and corresponding action nodes
at each level. Communication actions can only be assigned to leaves of the tree.
The edges connecting an action $a$ (taken at the parent state $s_i$) with a resulting state 
$s'_i$ have the transition probability $P_i(s'_i|s_i,a)$ assigned to them.
\end{definition}

Figure~\ref{policytree} shows a possible policy-tree. An option is represented by a 
policy-tree with all its leaves assigned communication actions.
\begin{figure}[t]
\centerline{\includegraphics[height=2in]{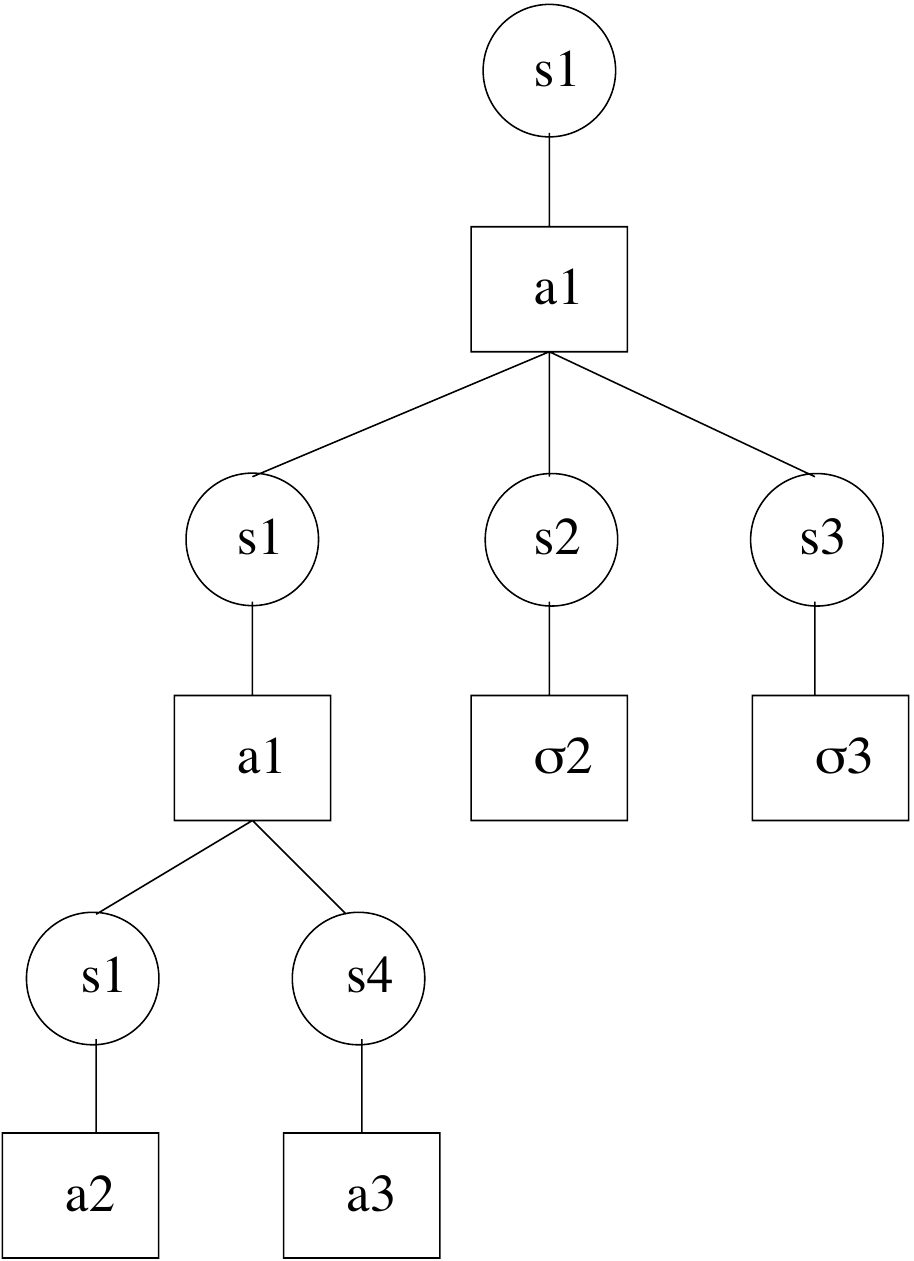}}
\caption{A policy tree of size $k$=3.}
\label{policytree}
\end{figure}
We denote a policy-tree $s_{root}\alpha$, by the state assigned to its root 
(e.g., $s_{root}$), and an  assignment of domain actions and local states to the rest of 
the nodes (e.g., $\alpha$).
The size of a policy-tree is defined as follows:
\begin{definition}[Size of a Policy-tree]
The size of a policy-tree is $k$ if the longest branch of the tree, starting from 
its root is composed of $k\!-\!1$ edges (counting the edges between actions and resulting
states).
A policy-tree of size one includes the root state and the action taken at that state.
\end{definition}

$\pi(\alpha_k)$ is the policy induced by the assignment $\alpha$ with at most $k$ actions in 
its implementation. The {\em expected cost $g$ of a policy-tree $s_{root}\alpha_k$} is 
the expected cost that will be incurred by an agent when it follows the policy 
$\pi(\alpha_k)$. We denote the set of nodes in a tree that do not correspond to leaves as $NL$ and
the set of states assigned to them $S_{NL}$. 
The notation $\alpha\setminus n$ refers to the $\alpha$ assignment excluding node $n$. 
The expected cost of a tree, $g(s_{root}\alpha_{k})$, is computed as follows: 

\[g(s_{root}\alpha_k)= \left\{ \begin{array}{ll}
                        C(a_{root}) &\mbox{if $k=1$}\\
                        C(a_{root}) + \sum_{s'_i\in S_{NL}} [Pr(s'_i|s_{root},a_{root})g(s'_i(\alpha\setminus root)_{k-1})] &\mbox{if $1< k\leq T$}\\
                       \end{array}
                       \right. \]

Since the decentralized process has factored states, we can write a global state
$s$ as a pair $(s_1,s_2)$. Each agent can act independently of each other for 
some period of time $k$ while it performs an option. Therefore, we can refer to the information state of the system after 
$k$ time steps as $s_{1}\alpha_k\!s_{2}\beta_k$, where $s_1\alpha_k$ and $s_2\beta_k$ 
correspond to each agent's policy tree of size $k$. We assume that at least one
agent communicates at time $t\!+\!k$. This will necessarily interrupt the other agent's
option at the same time $t\!+\!k$. Therefore, it is sufficient to look at pairs
of trees of the same size $k$. The information state refers to the 
belief an agent forms about the world based on the partial information 
available to it while it operates locally. In our model, agents get full
observability once they communicate and exchange their observations.

The heuristic function that will be used in the search for the optimal decentralized joint 
policy of the Dec-SMDP-Com follows the traditional notation,
i.e., $f(s) = g(s)+h(s)$. In our case, these functions will be defined over pairs of
policy-trees, i.e., $f(s\alpha_k\beta_k)=G(s\alpha_k\beta_k)+H(s\alpha_k\beta_k)$. The $f$ value 
denotes the backed-up value for implementing policies $\pi(\alpha_k)$ and $\pi(\beta_k)$, 
respectively by the two agents, starting in state $s$ at time $t$. The expected value of a 
state $s$ at time $t$ when the horizon is $T$ is given by the multi-step backup for state 
$s$ as follows: \[V(s,t,T)=\max_{|\alpha|,|\beta| \leq \ b }\{f(s\alpha\beta)\}.\] 
Note that the policy-trees corresponding to the assignments $\alpha$ and $\beta$ are of size 
{\em at most} $b\leq T$.  We define the expected cost of implementing a pair of policy-trees,
$G$, as the sum of the expected costs of each one separately.
If the leaves have communication actions, the cost of communication is taken into account
in the $g$ functions. 
As in Hansen's work, when the leaves are not assigned a communication action, we 
assume that the agents can sense at no cost to compute the $f$ function.
\[G(s_{1}\alpha_k\!s_{2}\beta_k)=g(s_{1}\alpha_k)+g(s_{2}\beta_k).\]

An option is a policy-tree with communication actions assigned to all its leaves.
That option is denoted by $opt_1(\alpha_k)$ (or $opt_2(\beta_k)$).
The message associated with a leaf corresponds to the local state that is assigned to that leaf by $\alpha$ 
(or $\beta$). We define the expected value of perfect information of the information state $s\alpha\beta$ 
after $k$ time steps:
\[H(s\alpha_k\beta_k)=\sum_{s'\in S} P^N(s',t\!+\!k|s,t,opt_1(\alpha_k),opt_2(\beta_k))V(s',t\!+\!k,T)\]

The multi-step backup policy-iteration algorithm adapted from Hansen to the 
decentralized control case appears in Figure~\ref{MSBPIalgorithm}. 
Intuitively, the heuristic search over all possible options unfolds as follows:
Each node in the search space is composed of two policy-trees, each representing a local policy 
for one agent. The search advances through nodes whose $f$ value (considering both
trees) is greater than the value of the global root state (composed of the roots of 
both policy-trees). All nodes whose $f$ value does not follow this inequality are actually pruned 
and are not used for updating the joint policy. The policy is updated when a node, composed
of two options is found for which $f>V$. All the leaves in these options (at all possible depths)
include communication acts. The updated policy $\delta'$ maps the global state $s$ to these two
options. When all the leaves in one policy-tree at current depth $i$ 
have communication actions assigned, the algorithm assigns communication acts to all the leaves in 
the other policy-tree at this same depth. This change in the policies is correct because there
is joint exchange  of information (i.e., all the actions are interrupted when at least one agent communicates). 
We notice, though, that there may be leaves in these policy-trees at depths lower than $i$ 
that may still have domain actions assigned. Therefore, these policy-trees cannot be considered options yet 
and they remain in the stack. Any leaves that remain assigned to domain actions will be expanded by
the algorithm. This expansion requires the addition of all the possible next states, that are reachable
by performing the domain-actions in the leaves, and the addition of a possible action for each 
such state. If all the leaves at depth $i$ of one policy-tree are already assigned communication acts, then
the algorithm expands only the leaves with domain actions at lower levels in both policy-trees. 
No leaf will be expanded beyond level $i$ because at the corresponding time one agent is going to
initiate communication and this option is going to be interrupted anyways.

\begin{figure}
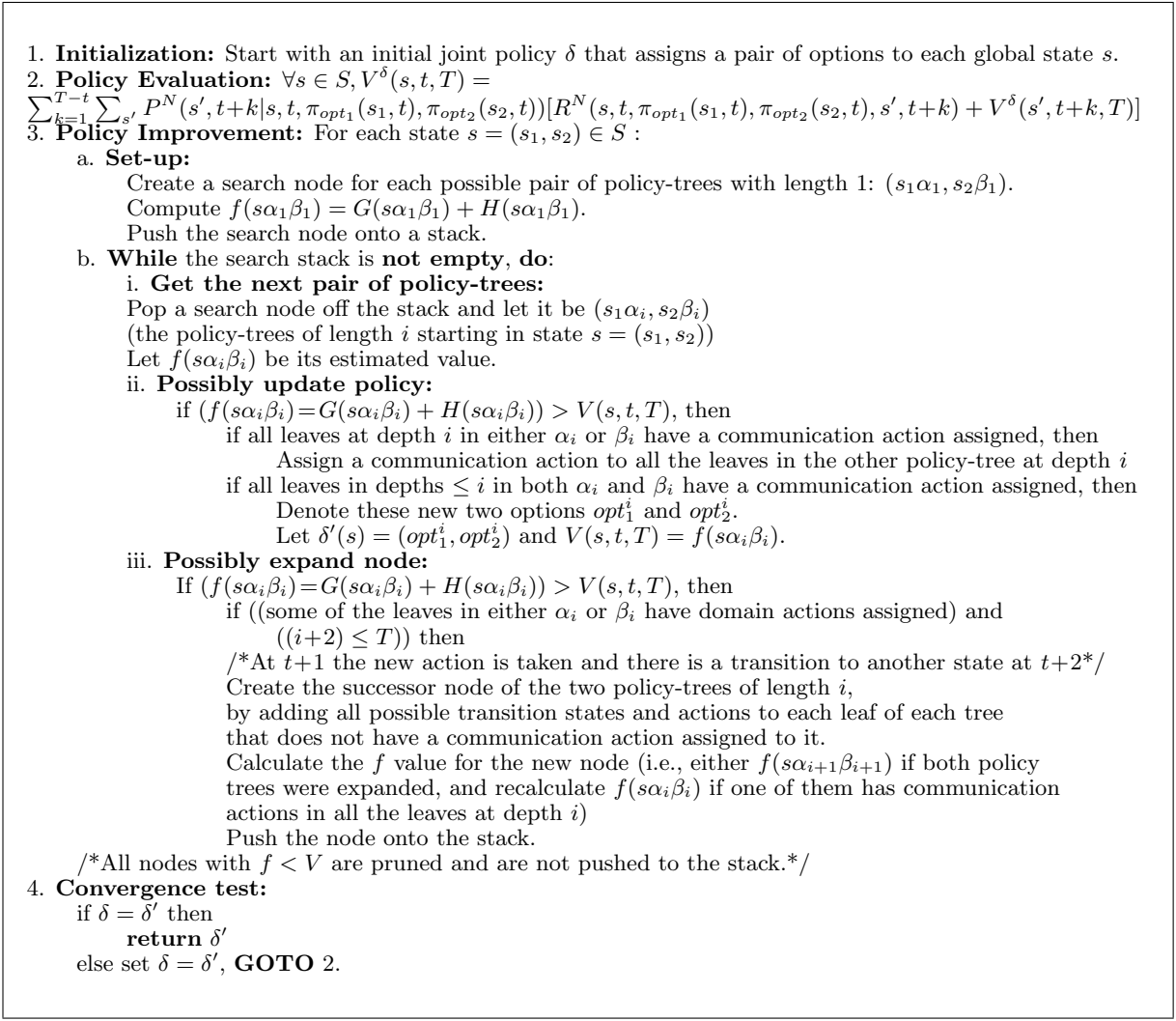

\begin{footnotesize}
\centerline{
\hbox{
\progstyle{
1. {\bf Initialization:} Start with an initial joint policy $\delta$ that assigns a pair 
of options to each global state $s$. \nnl
2. {\bf Policy Evaluation:} $\forall s\in S, V^\delta(s,t,T)=$ \nnl
$\sum_{k=1}^{T-t}\sum_{s'} P^N(s',t\!+\!k|s,t,\pi_{opt_1}(s_1,t),\pi_{opt_2}(s_2,t))[R^N(s,t,\pi_{opt_1}(s_1,t),\pi_{opt_2}(s_2,t),s',t\!+\!k)+V^\delta(s',t\!+\!k,T)]$  \nnl
3. {\bf Policy Improvement:} For each state $s=(s_1,s_2)\in S$ :\nnl   
\> a. {\bf Set-up:} \nnl
\>\>Create a search node for each possible pair of policy-trees with length $1$: $(s_{1}\alpha_1,s_{2}\beta_1)$.  \nnl
\>\> Compute $f(s\alpha_{1}\beta_{1})=G(s\alpha_{1}\beta_{1})+H(s\alpha_{1}\beta_{1})$. \nnl
\>\> Push the search node onto a stack. \nnl
\> b. {\bf While} the search stack is {\bf not empty}, {\bf do}: \nnl
\>\> i. {\bf Get the next pair of policy-trees:} \nnl
\>\> Pop a search node off the stack and let it be $(s_{1}\alpha_{i},s_{2}\beta_i)$ \nnl
\>\> (the policy-trees of length $i$ starting in state $s=(s_1,s_2)$)\nnl
\>\> Let $f(s\alpha_{i}\beta_{i})$ be its estimated value. \nnl
\>\>  ii. {\bf Possibly update policy:} \nnl
\>\>\> if $(f(s\alpha_{i}\beta_{i})\!=\!G(s\alpha_{i}\beta_{i})+H(s\alpha_{i}\beta_{i})) > V(s,t,T)$, then \nnl
\>\>\>\> if all leaves at depth $i$ in either $\alpha_{i}$ or $\beta_{i}$ have a communication action assigned, then \nnl
\>\>\>\>\> Assign a communication action to all the leaves in the other policy-tree at depth $i$\nnl
\>\>\>\> if all leaves in depths $\leq i$ in both $\alpha_i$ and $\beta_i$ have a communication action assigned, then \nnl
\>\>\>\>\> Denote these new two options $opt_1^{i}$ and $opt_2^{i}$. \nnl
\>\>\>\>\> Let $\delta'(s)=(opt_1^{i},opt_2^{i})$ and $V(s,t,T)=f(s\alpha_{i}\beta_{i})$.\nnl
\>\>  iii. {\bf Possibly expand node:} \nnl
\>\>\> If $(f(s\alpha_{i}\beta_{i})\!=\!G(s\alpha_{i}\beta_{i})+H(s\alpha_{i}\beta_{i})) > V(s,t,T)$, then \nnl
\>\>\>\> if ((some of the leaves in either $\alpha_i$ or $\beta_i$ have domain actions assigned) and \nnl
\>\>\>\>\> $((i\!+\!2)\leq T)$) then \nnl
\>\>\>\>/*At $t\!+\!1$ the new action is taken and there is a transition to another state at $t\!+\!2$*/\nnl 
\>\>\>\> Create the successor node of the two policy-trees of length $i$, \nnl
\>\>\>\> by adding all possible transition states and actions to each leaf of each tree \nnl
\>\>\>\> that does not have a communication action assigned to it. \nnl
\>\>\>\> Calculate the $f$ value for the new node (i.e., either $f(s\alpha_{i+1}\beta_{i+1})$ if both policy \nnl
\>\>\>\> trees were expanded, and recalculate $f(s\alpha_{i}\beta_{i})$ if one of them has communication \nnl
\>\>\>\> actions in all the leaves at depth $i$) \nnl
\>\>\>\> Push the node onto the stack. \nnl
\>/*All nodes with $f<V$ are pruned and are not pushed to the stack.*/\nnl
4. {\bf Convergence test:} \nnl
\> if $\delta=\delta'$ then \nnl
\>\> {\bf return} $\delta'$ \nnl
\> else set $\delta=\delta'$, {\bf GOTO} 2.
}
}
}
\caption{Multi-step Backup Policy-iteration ($\mbox{MSBPI}$) using depth-first branch-and-bound.}
\label{MSBPIalgorithm}
\end{footnotesize}
\end{figure}

In the next section, we show the convergence of our Multi-step Backup Policy-iteration ($\mbox{MSBPI}$) algorithm  to the optimal decentralized solution of the Dec-SMDP-Com, when agents 
follow temporally abstracted actions and the horizon is finite.

\subsection{Optimal Decentralized Solution with Multi-step Backups}

In this section, we prove that the $\mbox{MSBPI}$ algorithm presented in Figure~\ref{MSBPIalgorithm} 
converges to the optimal decentralized control joint policy with temporally abstracted actions and
direct communication. We first show that the policy improvement step in the algorithm based on
heuristic multi-step backups improves the value of the current policy if it is sub-optimal.
Finally, the policy iteration algorithm iterates over improving policies and it is known to 
converge.

\begin{theorem}
When the current joint policy is not optimal, the policy improvement step in the multi-step 
backup policy-iteration algorithm always finds an improved joint policy.
\end{theorem}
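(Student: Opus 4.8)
The plan is to follow the structure of Hansen's single-agent argument, lifted to pairs of policy-trees. Suppose the current joint policy $\delta$ is not optimal. Then there exists some global state $s$ and time $t$ where $V^\delta(s,t,T) < V^*(s,t,T)$, and by the Bellman-type optimality equation for the Dec-SMDP-Com (which holds by Lemma~\ref{MMDPlemma}, since the process is equivalent to an MMDP and hence to a standard finite-horizon MDP over the option action-set) there is a pair of options $(opt_1^*,opt_2^*)$ whose one-step multi-step backup against $V^\delta$ strictly exceeds $V^\delta(s,t,T)$. Concretely, the quantity
\[
\sum_{k=1}^{\min\{b,T-t\}}\sum_{s'}P^N(s',t\!+\!k|s,t,opt_1^*,opt_2^*)\bigl[R^N(s,t,opt_1^*,opt_2^*,s',t\!+\!k)+V^\delta(s',t\!+\!k,T)\bigr]
\]
is strictly greater than $V^\delta(s,t,T)$. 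The goal is to show the depth-first branch-and-bound search in the Policy Improvement step cannot terminate without having found \emph{some} pair of options achieving value $> V^\delta(s,t,T)$, hence updating $\delta'(s)$ and strictly improving the policy at $s$.

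The key steps, in order: (1) Identify $(opt_1^*,opt_2^*)$ as above and note that each $opt_i^*$ corresponds to a policy-tree of size at most $b$ with communication actions at all leaves. (2) Show that $f = G + H$ is an \emph{admissible} (optimistic) estimate along every partial policy-tree pair on the root-to-$(opt_1^*,opt_2^*)$ path: because intermediate leaves are costed as if the agent could sense for free (the $g$ functions omit $C_\Sigma$ at non-communicating leaves, exactly as in Hansen), and $H$ plugs in the already-converged $V(\cdot,t\!+\!k,T)$, the $f$ value of any prefix node dominates the true backed-up value of any option-pair extending it. In particular, every prefix of $(opt_1^*,opt_2^*)$ has $f$ value $\ge$ the true value of $(opt_1^*,opt_2^*)$, which is $> V^\delta(s,t,T) = V(s,t,T)$. (3) Conclude that no prefix of $(opt_1^*,opt_2^*)$ is ever pruned (pruning discards only nodes with $f \le V$), so the search expands down to $(opt_1^*,opt_2^*)$ itself, at which point both trees have communication at all leaves, the update in step (b.ii) fires, and $V(s,t,T)$ is raised. (4) Since this happens at every sub-optimal state $s$ and the backed-up value is never decreased elsewhere, $\delta' \ne \delta$ and $\delta'$ is a (weak, and at $s$ strict) improvement; standard policy-iteration monotonicity then gives the theorem.

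The main obstacle I expect is step (2): making the admissibility / monotonicity argument airtight in the presence of the joint-termination subtlety. When all depth-$i$ leaves of one tree become communication actions, the algorithm forces communication on the other tree's depth-$i$ leaves too, and recomputes $f$. I need to check that this forced truncation does not \emph{decrease} the $f$ value below the true value of the target option-pair — i.e. that truncating agent 2's option to match agent 1's communication time is consistent with how $P^N$ and $R^N$ were defined (they already assume the hearer's option is interrupted), so the "optimistic" property is preserved rather than broken. A secondary, more routine point is verifying that the depth-first branch-and-bound terminates at all: the depth bound $b \le T$ and the finite branching by $|S|$ make the search tree finite, so termination is immediate, but it is worth stating so that "the search cannot terminate without finding the improving pair" is a meaningful claim. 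Once these are pinned down, the reduction to Hansen's convergence result for multi-step backup policy iteration closes the argument.
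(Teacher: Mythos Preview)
Your step (2) is where the argument breaks. The heuristic $H$ is \emph{not} admissible in the sense you need. It plugs in $V^\delta(\cdot,t{+}i,T)$, the value of the \emph{current, possibly sub-optimal} policy, not $V^*$. Consequently the $f$ value of a depth-$i$ prefix need not dominate the backed-up value of the full depth-$k$ option extending it. Concretely, writing $f_i$ and $f_k$ for the prefix and the full option, one has
\[
f_k - f_i \;=\; \sum_{s'} P^N(s',t{+}i\mid s,\ldots)\Bigl[g(s'\alpha(i,k)) + g(s'\beta(i,k)) + \sum_{s''}P^N(s'',t{+}k\mid s',\ldots)V^\delta(s'',t{+}k,T) - V^\delta(s',t{+}i,T)\Bigr],
\]
and the bracket is positive at some $s'$ precisely when the continuation $\alpha(i,k),\beta(i,k)$ followed by $\delta$ beats $\delta$ from $s'$. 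Since your starting hypothesis is that $(opt_1^*,opt_2^*)$ improves on $\delta$, this situation is not pathological but expected: the improving option can have $f_k > f_i$, so the prefix may well satisfy $f_i \le V^\delta(s,t,T)$ and be pruned even though the full option would update the policy. The ``sense for free'' convention on $g$ does not rescue this; it only removes the $C_\Sigma$ term, which is a fixed offset and does not control the gap above.

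The paper's proof (following Hansen) sidesteps admissibility entirely and instead runs an induction on option length. If the depth-$k$ improving pair is pruned at depth $i$, the inequality $f_i \le V^\delta(s) < f_k$ is expanded exactly as above to exhibit some reachable $s'$ at which the \emph{shorter} pair $(\alpha(i,k),\beta(i,k))$ of length $<k$ satisfies $f(s'\alpha(i,k)\beta(i,k)) > V^\delta(s')$. By the induction hypothesis the search at $s'$ then finds an improvement. So the theorem is proved not by showing the target option is reached, but by showing that pruning it forces a smaller improvement to surface at another state. Your outline would be fixed by replacing step (2)--(3) with this inductive argument; the termination observation in your step (4) is fine and is used as the base case (all depth-one pairs are enumerated).
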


\begin{proof}
We adapt Hansen's proof to our decentralized control problem, when policies are represented by
policy-trees. Algorithm $\mbox{MSBPI}$ in Figure~\ref{MSBPIalgorithm} updates the current policy 
when the new policy assigns a pair of options that yield a greater value for a certain global state. 
We show by induction on the size of the options, that at least for one state, a new 
option is found in the improvement step (step {\em 3.b.ii}).

If the value of any state can be improved by two policy-trees of size one, then
an improved joint policy is found because all the policy-trees of size one are evaluated.
We initialized $\delta$ with such policy-trees. 
We assume that an improved joint policy can be found with policy-trees of size at most $k$.
We show that an improved joint policy is found with policy-trees of size $k$.
Lets assume that $\alpha_k$ is a policy tree of size $k$, such that $f(s\alpha_k\beta)>V(s)$
with communication actions assigned to its leaves. If this is the case then the policy 
followed by agent $2$ will be interrupted at time $k$ at the latest.
One possibility is that $s\alpha_k\beta$ is evaluated by the algorithm. Then, an
improved joint policy is indeed found.
If this pair of policy-trees was not evaluated by the algorithm, it means that $\alpha$
was pruned earlier. We assume that this happened at level $i$. This means that
$f(s\alpha_i\beta)<V(s)$. We assumed that $f(s\alpha_k\beta)>V(s)$ so we obtain that:
$f(s\alpha_k\beta) > f(s\alpha_i\beta)$.

If we expand the $f$ values in this inequality, we obtain the following:
\begin{small}
\[ g(s\alpha_i)+g(s\beta)+\sum_{s'}P^N(s',t\!+\!i|s,opt_1(\alpha_i),opt_2(\beta))[g(s'\alpha(i,k))+g(s'\beta)+\sum_{s''}P^N(s'',t\!+\!i\!+\!k\!-\!i)V(s'')]>\]
\[g(s\alpha_i)+g(s\beta)+\sum_{s'}P^N(s',t\!+\!i|s,opt_1(\alpha_k),opt_2(\beta))V(s',t\!+\!i)\]
\end{small}
where $g(s'\alpha(i,k))$ refers to the expected cost of the subtree starting at level $i$ and ending at level $k$ starting from $s'$.  After simplification we obtain:
\begin{small}
\[\sum_{s'}P^N(s',t\!+\!i|s,opt_1(\alpha_i),opt_2(\beta))[g(s'\alpha(i,k))+g(s'\beta)+\sum_{s''}P^N(s'',t\!+\!i\!+\!k\!-\!i)V(s'')]>\]
\[\sum_{s'}P^N(s',t\!+\!i|s,opt_1(\alpha_k),opt_2(\beta))V(s',t\!+\!i) \]
\end{small}

That is, there exists some state $s'$ for which $f(s'\alpha(i,k)\beta) > V(s')$.
Since the policy-tree $\alpha(i,k)$ has size less than $k$, by the induction assumption
we obtain that there exists some state $s'$ for which the multi-step backed-up value
is increased. Therefore, the policy found in step $3.b.ii$ is indeed an improved policy.
\hfill $\Box$
\end{proof}

\begin{lemma}
The complexity of computing the optimal mechanism over general options by the $\mbox{MSBPI}$ algorithm is
$O(((|A_1|+|\Sigma|)(|A_2|+|\Sigma|))^{|S|^{(T-1)}})$. (General options are based on any possible primitive
domain action in the model, and any communication act).
\end{lemma}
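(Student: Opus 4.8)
The plan is to bound the size of the search space that the $\mbox{MSBPI}$ algorithm must traverse, since by the previous theorem the algorithm converges to the optimal mechanism and, by Lemma~\ref{MMDPlemma}, solving the induced MMDP is polynomial in the number of states and actions. Thus the dominant term is the number of distinct options available to each agent, and I would argue that counting option-pairs gives the stated bound. First I would recall the tree representation of an option established earlier in the excerpt: an option is a policy-tree all of whose leaves carry communication actions, its depth is bounded by the horizon $T$, and its branching factor is bounded by $|S|$ (equivalently $|S_i|$ for agent $i$, but we bound crudely by $|S|$). Hence the number of leaves in a single agent's policy-tree is at most $|S|^{T-1}$ — the longest branch has $k-1\le T-1$ edges in the sense of the ``size of a policy-tree'' definition, so the last level of nodes sits at depth $T-1$.

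Next I would count the assignments. At each of the (at most) $|S|^{T-1}$ positions in agent $i$'s tree the policy picks either a primitive domain action from $A_i$ or a communication act from $\Sigma$, giving $(|A_i|+|\Sigma|)^{|S|^{T-1}}$ candidate trees for agent $i$. (Interior nodes are restricted to $A_i$ and leaves to $\Sigma$, but the uniform bound $(|A_i|+|\Sigma|)$ per node dominates this refinement and keeps the expression clean.) Since a joint option is a pair, one for each agent, the number of joint options in a single global state is $(|A_1|+|\Sigma|)^{|S|^{T-1}}\cdot(|A_2|+|\Sigma|)^{|S|^{T-1}} = \bigl((|A_1|+|\Sigma|)(|A_2|+|\Sigma|)\bigr)^{|S|^{T-1}}$. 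Then I would observe that the policy-iteration loop evaluates and improves over these option-pairs for each of the $|S|$ global states and each of the $T$ time points, and that the per-iteration work (policy evaluation, the branch-and-bound improvement step, convergence test) together with the number of improving iterations contributes only polynomial factors in $|S|$, $T$, $|A_1|$, $|A_2|$, $|\Sigma|$; these are absorbed into the big-$O$ since they are dominated by the doubly-exponential option count. This yields the claimed $O\bigl(((|A_1|+|\Sigma|)(|A_2|+|\Sigma|))^{|S|^{(T-1)}}\bigr)$.

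The main obstacle, and the step I would be most careful about, is pinning down the exponent as $|S|^{T-1}$ rather than $|S|^{T}$: this hinges on reconciling the two ``length/size'' conventions introduced in the section (an option of length $k$ performs at most $k$ domain actions; a policy-tree of size $k$ has longest branch with $k-1$ edges) and on remembering that the last domain action occurs at time $T-1$ because a communication act consumes the final step, or because options are artificially terminated at the horizon. I would make explicit that the deepest level of state-nodes that can receive an action assignment is level $T-1$, so the leaf count is $|S|^{T-1}$ and not $|S|^T$. A secondary subtlety is that the earlier discussion in the text quotes the naive bound with exponent $|S|^T$; I would note that the $-1$ sharpening comes precisely from the forced terminal communication action and does not contradict that looser estimate. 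Everything else — the polynomiality of the MMDP solve, the correctness of the improvement step — is already available from Lemma~\ref{MMDPlemma} and the preceding theorem, so no further heavy lifting is required.
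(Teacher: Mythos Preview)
Your proposal is correct and mirrors the paper's own argument: both proofs bound the number of leaves of a policy-tree by $|S|^{T-1}$, count $(|A_i|+|\Sigma|)$ choices per node, take the product over the two agents, and absorb the remaining polynomial factors from iteration and evaluation into the big-$O$. Your treatment is in fact more careful than the paper's---it explicitly reconciles the $|S|^{T-1}$ exponent with the looser $|S|^T$ estimate appearing earlier in the text and spells out why the policy-iteration overhead is dominated---whereas the paper simply asserts the leaf count and the resulting bound without addressing that discrepancy.
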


\begin{proof}
Each agent can perform any of the primitive domain actions in $A_i$ and can communicate
any possible message in $\Sigma$. There can be at most $|S|^{(T-1)}$ leaves in a policy tree with 
horizon $T$ and $|S|$ possible resulting states from each transition. 
Therefore, each time the $\mbox{MSBPI}$ algorithm expands a policy tree (step
3.b.iii in Figure~\ref{MSBPIalgorithm}), the number of resulting trees is 
$((|A_1|+|\Sigma|)(|A_2|+|\Sigma|))^{|S|^{(T-1)}}$. In the worst case, this is the number of trees
that the algorithm will develop in one iteration. Therefore, the size of the search space is 
a function of this number times the number of iterations until convergence.
\hfill $\Box$
\end{proof}

Solving optimally a Dec-MDP-Com with independent transitions
and observations has been shown to be in NP~\cite{Goldman04c}. 
As we show here, solving for the optimal mechanism is 
harder, although the solution may not be the optimal.
This is due to the main difference between these two problems. In the Dec-MDP-Com, we 
know that since the transitions and observations are independent, a local state is a sufficient 
statistic for the history of observations. However, in order to compute an optimal mechanism
we need to search in the space of options, that is, no single local state is a sufficient statistic.
When options are allowed to be general, the search space is larger since each possible option that needs to be considered can be
arbitrarily large (with the length of each branch bounded by $T$).
For example, in the Meeting under Uncertainty scenario (presented in Section~\ref{example}),
agents aim at meeting in a stochastic environment in the shortest time as possible.
Each agent can choose to perform anyone of six primitive actions (four move actions, 
one stay action and a communication action). Even in a small world composed of 
$100$ possible locations,
implementing the $\mbox{MSBPI}$ algorithm is intractable. It will require the 
expansion of all the possible combinations of pairs of policy-trees leading to a 
possible addition of $36^{100^{(T-1)}}$ nodes to the search space at each iteration.
Restricting the mechanism to a certain set of possible options, for example goal-oriented options
leads to a significant reduction in the complexity of the algorithm as we shown in the 
following two sections.

\section{Dec-SMDP-Com with Local Goal-oriented Behavior}
\label{DecSMDPLocalGoals}

The previous section provided an algorithm that computes the optimal mechanism, searching
over all possible combinations of domain and communication actions for each agent.
On the one hand, this solution is the most general and does not restrict the individual behaviors
in any aspect. On the other hand, this solution may require the search of a very large
space, even after this space is pruned by the heuristic search technique.
Therefore, in order to provide a practical decomposition mechanism algorithm, it
is reasonable to restrict the mechanism to certain sets of
individual behaviors. In this section, we concentrate on {\em goal-oriented options}
and propose an algorithm that computes the optimal mechanism with respect to this
set of options: i.e., the algorithm finds a mapping from each global
state to a set of locally goal oriented behaviors with the highest value. The algorithm proposed has the 
same structure as the $\mbox{MSBPI}$ algorithm; the main difference is in how the options are built.

\begin{definition}[Goal-oriented Options]
A goal-oriented option is a local policy that achieves a given local goal.
\end{definition}

We study locally goal-oriented mechanisms, which map each global state to a pair of goal-oriented options 
and a period of time $k$. We assume here that a set of local goals $\hat{G_i}$
is provided with the problem. For each such local goal, a local policy that can achieve it is considered a goal-oriented option. 
When the mechanism is applied, each agent follows its policy to the corresponding local goal for 
$k$ time steps. 
At time $k+1$, the agents exchange information and stop acting (even though they may not 
have reached their local goal). The agents, then, become synchronized and they are 
assigned possibly different local goals and a working period $k'$.

The algorithm presented in this section, $\mbox{LGO-MSBPI}$, solves the decentralized 
control problem with  communication by finding the optimal mapping between global states to 
local goals and periods of time (the algorithm finds the best solution it can 
given that the agents will act individually for some periods of time). 
We start with an arbitrary joint policy that assigns one pair of local goal states and a 
number $k$ to each global state. The current joint policy is evaluated and set as the current best known 
mechanism. Given a joint policy $\delta:S\times \tau \rightarrow \hat{G}_1\times \hat{G}_2\times \tau$, 
($\hat{G}_i \subseteq S_i \subset S$),
the value of a state $s$ at a time $t$, when $T$ is the finite horizon is given in Equation~\ref{VPolicyLG}: 
(this value is only computed for states in which $t\!+\!k\leq T$).
\begin{small}
\begin{eqnarray}
\label{VPolicyLG}
\!\!\!\!\!\!V^\delta(s,t,T) & \!\!\!=\!\!\! & \left\{ \begin{array}{ll}
0 &\hskip-7cm\mbox{if $t\!=\!T$}\\
\sum_{s'\in S} P^N_g(s',t\!+\!k|s,t,\pi_{\hat{g_1}}(s_1),\pi_{\hat{g_2}}(s_2))[R^N_g(s,t,\pi_{\hat{g_1}},\pi_{\hat{g_2}},s',k)\!+\!V^\delta(s',t\!+\!k,T)] \\
&\hskip-4cm\mbox{s.t. $\delta(s,t)=(\hat{g_1},\hat{g_2},k)$}\\
                             \end{array}
                      \right.
\end{eqnarray}
\end{small}

Notice that $R^N_g(s,\pi_{\hat{g_1}},_{\hat{g_2}},s',k)$ can be defined similarly to 
$R^N()$ (see Definition~\ref{Dec-SMDP-Com}),
taking into account that the options here are aimed at reaching a certain 
local goal state ($\pi_{\hat{g_1}}$ and $\pi_{\hat{g_2}}$ are
aimed at reaching the local goal states $\hat{g_1}$ and $\hat{g_2}$, 
respectively).
\begin{small}
\[R^N_g(s,t,\pi_{\hat{g_1}},\pi_{\hat{g_2}},s',k)=\overline{C}(\pi_{\hat{g_1}},\pi_{\hat{g_2}},s,s',k)\!+\!C_{\Sigma}=\]
\[C_{\Sigma}\!+\!\sum_{q^1,\ldots,q^{k\!-\!1}}P(<\!s,q^1,\ldots,q^{k\!-\!1},s'\!>)\cdot C_{seq}(<\!s,q^1,\ldots,s^{k\!-\!1},s'\!>\]
\end{small}

There is a one-to-one mapping between goals and goal-oriented options. That is, the policy $\pi_{{g_i}}$  assigned by $\delta$
can be found by each agent independently by solving optimally each agent's local process 
$MDP_i=(S_i,P_i,R_i,\hat{G}_i,T)$: The set of global states $S$ is factored so
each agent has its own set of local states. The process
has independent transitions, so $P_i$ is the primitive transition probability 
known when we described the options framework. 
$R_i$ is the cost incurred by an agent when it performs a primitive action $a_i$ and zero if the agent
reaches a goal state in $\hat{G}_i$. $T$ is the finite horizon of the global problem. 

$P^N_g$ (with the goal $g$ subscript) is different from the probability function $P^N$ 
that appears in Section~\ref{MDPOPT}. $P^N_g$ is the probability of reaching a global state 
$s'$ after $k$ time steps, while trying to reach $\hat{g}_1$ and $\hat{g}_2$ respectively following
the corresponding optimal local policies. 

\begin{small}
\[P^{N}_g(s',t\!+\!k|s,t\!+\!i,\pi_{\hat{g_1}}(s_1),\pi_{\hat{g_2}}(s_2)) = \]
\[               \left\{ \begin{array}{ll}
                       1  &\hskip-6cm\mbox{if $i\!=\!k$ and $s\!=\!s'$}\\
                       0  &\hskip-6cm\mbox{if $i\!=\!k$ and $s\neq s'$}\\
                       \  &\hskip-6cm\mbox{if $i<k$}\\
\ & \ \\
                       \sum_{s^*\in S} P(s^*|s,\pi_{\hat{g_1}}(s_1),\pi_{\hat{g_2}}(s_2))\cdot P^N_g(s',t\!+\!k|s^*,t\!+\!i\!+\!1,\pi_{\hat{g_1}}(s^*_1),\pi_{\hat{g_2}}(s^*_2)) \\
&\hskip-4cm\mbox{s.t. $\delta(s,t\!+\!i)=(\hat{g_1},\hat{g_2},k)$}\\
                              \end{array}
                      \right.
\]
\end{small}

Each iteration of the $\mbox{LGO-MSBPI}$ algorithm (shown in Figure~\ref{algorithmGoals1}) tries to improve 
the value of each state by testing all the possible pairs of local goal states with 
increasing number of time steps allowed until communication. The value of $f$ is computed for 
each mapping from states to assignments of local goals and periods of time.
The $f$ function for a given global state, current time, pair of local goals and a given
period of time $k$ expresses the cost incurred by the agents after having acted for $k$
time steps and having communicated at time $k+1$, and the expected value of the reachable
states after $k$ time steps (these states are those reached by the agents while
following their corresponding optimal local policies towards $\hat{g}_1$ and $\hat{g}_2$
respectively). The current joint policy is updated when the $f$ value 
for some state $s$, time $t$, local goals $\hat{g}_1$ and $\hat{g}_2$ and period $k$ is greater than 
the value $V^\delta(s,t,T)$ computed for the current best known assignment of local goals 
and period of time. 
Formally:
\begin{eqnarray}
\label{fLG}
f(s,t,\hat{g}_1,\hat{g}_2,k) &=& G(s,t,\hat{g}_1,\hat{g}_2,k) + H(s,t,\hat{g}_1,\hat{g}_2,k) \\
\label{GLG} G(s,t,\hat{g}_1,\hat{g}_2,k) &=& \overline{C}(\pi_{\hat{g_1}},\pi_{\hat{g_2}},s,t,k) + C_\Sigma \\
\label{HLG} H(s,t,\hat{g}_1,\hat{g}_2,k) &=& \left\{ \begin{array}{ll}
0 &\hskip-2cm\mbox{if $t\!=\!T$}\\
\    &\hskip-2cm\mbox{if $t\!<\!T$} \\ 
\sum_{s'\in S} P^N_g(s',t+k|s,t,\pi_{\hat{g}_1}(s_1),\pi_{\hat{g}_2}(s_2))V^\delta(s',t+k,T) &\mbox{}\\

                              \end{array}
                      \right.
\end{eqnarray}

$\overline{C}(\pi_{\hat{g_1}},\pi_{\hat{g_2}},s,t,k)$ is the expected cost incurred by the 
agents when following the corresponding options for $k$ time steps starting from state
$s$. This is defined similarly to the expected cost explained in Definition~\ref{Dec-SMDP-Com}.
We notice that the computation of $f$ refers to the goals being evaluated by the 
algorithm, while the evaluation of the policy (step $2$) refers to the goals {\em assigned} by the current best policy.

\begin{figure}
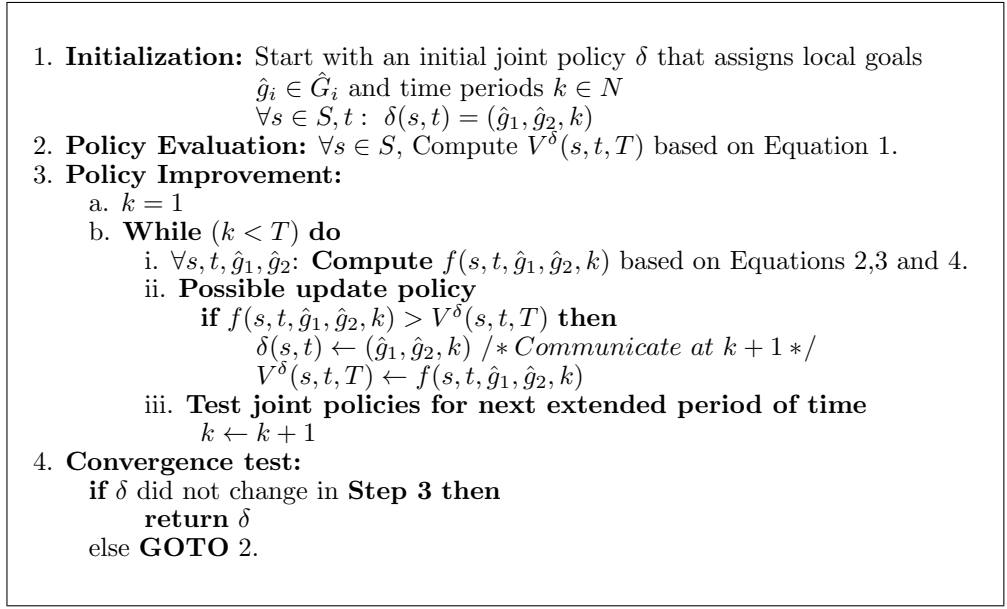

\begin{small}
\centerline{
\hbox{
\progstyle{
1. {\bf Initialization:} Start with an initial joint policy $\delta$ that assigns local goals \nnl
\>\>\>\> $\hat{g}_i\in \hat{G}_i$ and time periods $k\in N$\nnl
\>\>\>\> $\forall s\in S,t: \ \delta(s,t)=(\hat{g}_1,\hat{g}_2,k)$ \nnl
2. {\bf Policy Evaluation:} $\forall s\in S$, Compute $V^\delta(s,t,T)$ based on 
Equation~\ref{VPolicyLG}.\nnl
3. {\bf Policy Improvement:} \nnl  
\> a. $k=1$ \nnl
\> b. {\bf While} $(k < T)$ {\bf do}  \nnl
\>\> i. $\forall s,t,\hat{g}_1,\hat{g}_2$: {\bf Compute} $f(s,t,\hat{g}_1,\hat{g}_2,k)$ based on Equations~\ref{fLG},\ref{GLG} and \ref{HLG}.\nnl
\>\> ii. {\bf Possible update policy} \nnl
\>\>\>  {\bf if} $f(s,t,\hat{g}_1,\hat{g}_2,k) > V^\delta(s,t,T) $ {\bf then} \nnl
\>\>\>\>  $\delta(s,t) \leftarrow (\hat{g}_1,\hat{g}_2,k)$ $/\!* Communicate\ at\ k+1 *\!/$ \nnl
\>\>\>\> $V^\delta(s,t,T) \leftarrow f(s,t,\hat{g}_1,\hat{g}_2,k)$  \nnl
\>\>  iii. {\bf Test joint policies for next extended period of time}\nnl
\>\>\> $k \leftarrow k+1$ \nnl
4. {\bf Convergence test:} \nnl
\> {\bf if} $\delta$ did not change in {\bf Step 3} {\bf then} \nnl
\>\> {\bf return} $\delta$ \nnl
\> else {\bf GOTO} 2.
}
}
}
\caption{Multi-step Backup Policy-iteration with local goal-oriented behavior ($\mbox{LGO-MSBPI}$).}
\label{algorithmGoals1}
\end{small}
\end{figure}

\subsection{Convergence of the Algorithm and Its Complexity} 

\begin{lemma}
The algorithm $\mbox{LGO-MSBPI}$ in Figure~\ref{algorithmGoals1} converges to the optimal solution.
\end{lemma}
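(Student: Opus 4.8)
The plan is to mirror the structure of the proof already given for the general $\mbox{MSBPI}$ algorithm (the preceding Theorem) but adapted to the restricted, and much simpler, setting of goal-oriented options. The key observation is that $\mbox{LGO-MSBPI}$ is an instance of modified/multi-step policy iteration over a finite MDP: by Lemma~\ref{MMDPlemma}, the underlying Dec-SMDP-Com is equivalent to a multi-agent MDP, and restricting the option set to goal-oriented options simply replaces the (doubly-exponential) action set by the finite action set $\hat{G}_1 \times \hat{G}_2 \times \{1,\ldots,T-1\}$, with transition and reward functions $P^N_g$ and $R^N_g$ as defined in Equation~\ref{VPolicyLG} and the surrounding displays. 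So I would first argue that the pair $(\delta, V^\delta)$ computed in the Policy Evaluation step (Step 2) is exactly the value function of the current joint policy in this finite MDP, which is well-defined because the horizon is finite and every option consumes at least one time step (so $t+k > t$ and the recursion in Equation~\ref{VPolicyLG} bottoms out at $t=T$).

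Next I would establish the policy-improvement property: if $\delta$ is not optimal over the space of goal-oriented mechanisms, then Step 3 produces a strictly improved policy $\delta'$. Here, unlike the general case, no heuristic pruning is involved — Step 3 enumerates \emph{all} triples $(s,t,\hat{g}_1,\hat{g}_2,k)$ with $k < T$ exhaustively (the while-loop over $k$ and the $\forall s,t,\hat{g}_1,\hat{g}_2$ quantifier in Step~3.b.i), and updates whenever $f(s,t,\hat{g}_1,\hat{g}_2,k) > V^\delta(s,t,T)$. Since $f$ is precisely the one-step lookahead value $G + H$ using the true successor values $V^\delta$, the standard policy-improvement argument applies verbatim: if for some state-time $(s,t)$ there exists a goal-oriented action that does better than the currently assigned one, the algorithm finds it and switches; if no such action exists at any $(s,t)$, then $\delta$ satisfies the Bellman optimality equation for this MDP and is therefore optimal. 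I would invoke the monotonicity of the Bellman backup (each improvement weakly increases $V^\delta$ at every state and strictly increases it somewhere) to conclude that successive policies are strictly improving until convergence.

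Finally, termination: because the goal-oriented action set $\hat{G}_1 \times \hat{G}_2 \times \{1,\ldots,T-1\}$ is finite and the state-time space $S \times \tau$ is finite, there are only finitely many joint policies $\delta: S \times \tau \rightarrow \hat{G}_1 \times \hat{G}_2 \times \tau$. Strict improvement at each non-converged iteration (no policy can recur, since values strictly increase) forces the algorithm to reach a $\delta$ with $\delta = \delta'$ after finitely many iterations, and by the improvement argument that fixed point is optimal — which is exactly what the convergence test in Step~4 detects. I expect the only mildly delicate point to be checking that $f$ as defined in Equations~\ref{fLG}--\ref{HLG} genuinely coincides with the Bellman one-step backup of $V^\delta$ under the transition kernel $P^N_g$ and reward $R^N_g$, i.e. that $G$ correctly accounts for the accumulated intra-option cost plus $C_\Sigma$ and that $H$ correctly pushes the horizon forward by $k$; once that identification is made, the rest is the textbook finite-horizon policy-iteration convergence argument, and it is substantially easier than the heuristic-search case handled in the preceding Theorem because there is no pruning to reason about.
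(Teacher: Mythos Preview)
Your proposal is correct and follows essentially the same approach as the paper: finiteness of the state space, the goal set, and the horizon ensures Step~3 terminates, and the classical policy-iteration argument (strict improvement at each iteration over a finite policy space) yields convergence to the optimum. The paper's proof is extremely terse---it simply invokes ``like the classical policy-iteration algorithm'' without spelling out the Bellman-backup identification or the strict-improvement step---whereas you have unpacked those details; but the underlying argument is the same.
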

\begin{proof}
The set of global states $S$ and the set of local goal states $\hat{G}_i\subseteq S$ are finite.
The horizon $T$ is also finite. Therefore, step 3 in the algorithm will terminate.
Like the classical policy-iteration algorithm, the $\mbox{LGO-MSBPI}$ algorithm also will converge after a finite
numbers of calls to step 3 where the policy can only improve its value from one iteration
to another.
\hfill $\Box$
\end{proof}

\begin{lemma}
The complexity of computing the optimal mechanism based on local goal-oriented behavior following the $\mbox{LGO-MSBPI}$ algorithm is polynomial in the size of the state space.
\end{lemma}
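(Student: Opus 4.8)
The plan is to bound the work done in a single pass of Step~3 of $\mbox{LGO-MSBPI}$ and then multiply by the (polynomially bounded) number of iterations. First I would count the objects that the algorithm enumerates: there are $|S|$ global states, $T$ time points, $|\hat{G}_1|\cdot|\hat{G}_2|\leq|S_1|\cdot|S_2|=|S|$ pairs of local goals, and $k$ ranges over at most $T$ values. Hence Step~3 evaluates $f(s,t,\hat{g}_1,\hat{g}_2,k)$ for at most $|S|^2\cdot T^2$ tuples, which is polynomial in $|S|$ and $T$. The crux is therefore to argue that computing each individual $f$ value — equivalently the quantities $\overline{C}(\pi_{\hat g_1},\pi_{\hat g_2},s,t,k)$, $P^N_g$, and $H$ in Equations~\ref{fLG}--\ref{HLG} — is itself polynomial.

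Second, I would handle the one-time cost of obtaining the goal-oriented options. By the remark following Equation~\ref{VPolicyLG}, the policy $\pi_{\hat g_i}$ for each local goal $\hat g_i$ is the optimal policy of the single-agent finite-horizon $MDP_i=(S_i,P_i,R_i,\hat G_i,T)$; by the polynomial complexity of single-agent MDPs~\cite{Papadimitriou87} each such policy is computed in time polynomial in $|S_i|$ and $T$, and there are at most $|\hat G_1|+|\hat G_2|\leq|S|$ of them. So this preprocessing is polynomial.

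Third, I would show the per-tuple cost is polynomial by observing that $P^N_g$ and $\overline{C}$ admit the recursive (dynamic-programming) definitions already given in the text: $P^N_g(s',t\!+\!k\mid s,t\!+\!i,\cdot)$ is defined by a one-step backup that sums over $s^*\in S$, so filling in the table $\{P^N_g(s',\cdot\mid s,t\!+\!i,\cdot)\}$ for all $s,s',i$ costs $O(|S|^3\cdot k)$; similarly $\overline{C}$ is the expectation of $C_{seq}$ under the induced sequence distribution, which the same backward recursion on the joint chain $P(s^*\mid s,\pi_{\hat g_1}(s_1),\pi_{\hat g_2}(s_2))$ computes in polynomial time without ever enumerating the exponentially many sequences $\langle s,q^1,\dots,q^{k-1},s'\rangle$ explicitly. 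Once these tables are available, each $f$, $G$, $H$ is a sum over at most $|S|$ terms, hence $O(|S|)$ additional work. Multiplying the per-tuple cost by the number of tuples, and then by the number of policy-iteration passes — which is bounded by the number of distinct policies $(|\hat G_1||\hat G_2|T)^{|S|\cdot T}$ in the worst case, but is more naturally bounded polynomially since each pass strictly increases a value that lies in a bounded, finitely-quantized range — gives a total running time polynomial in $|S|$ (with the horizon $T$ treated as part of the input).

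The main obstacle, and the step I would be most careful about, is the sequence-based definitions of $\overline{C}$ and $R^N_g$: taken literally they sum over all intermediate state sequences, which is exponential, so the proof must explicitly recast them as the backward recursion over the joint transition chain and argue this recursion computes the same expectation. A secondary subtlety is pinning down the iteration count of the outer policy-iteration loop; I would either cite the standard monotone-improvement-plus-finiteness argument (as in the convergence lemma just proved) together with the fact that values are drawn from a set whose size is polynomially bounded given bounded, rational costs, or simply absorb the iteration bound into the claimed polynomial by noting each pass improves the finite-valued policy. Everything else is bookkeeping.
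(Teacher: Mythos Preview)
Your approach is essentially the same as the paper's: bound Step~2 (policy evaluation) by dynamic programming and bound Step~3 (policy improvement) by counting the tuples $(s,t,\hat g_1,\hat g_2,k)$ and arguing polynomial cost per tuple. The paper's own proof is a three-line sketch that simply asserts Step~2 is polynomial by backward DP and states Step~3 costs $O(T^2|S||\hat G|)$, noting $|\hat G_i|\leq|S_i|$ in the worst case; it does not unpack the per-tuple cost, the preprocessing of the local policies $\pi_{\hat g_i}$, or the sequence-based definition of $\overline{C}$ at all.

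Two remarks. First, the concerns you flag about $\overline{C}$ and $R^N_g$ are well taken and your backward-recursion recast is exactly the right fix; the paper simply does not mention this and implicitly relies on the same DP computation. Second, neither you nor the paper pins down the number of outer policy-iteration passes. Your ``bounded, finitely-quantized range'' argument is not sound as stated (values can increase by arbitrarily small rational amounts), and the paper is silent on this point too; strictly speaking, both proofs leave the overall iteration count unbounded by anything short of the (exponential) number of distinct policies. If you want a clean polynomial bound for the \emph{entire} algorithm, the natural fix is to observe that for a finite horizon $T$ one can replace the policy-iteration outer loop by a single backward sweep in $t$ (value iteration), which terminates in exactly $T$ stages; alternatively, just match the paper and claim polynomiality for each of Steps~2 and~3 separately.
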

\begin{proof}
Step 2 of the $\mbox{LGO-MSBPI}$ algorithm can be computed with dynamic programming in
polynomial  time
(the value of a state is computed in a backwards manner from a finite horizon $T$).
The complexity of improving a policy in Step 3 is polynomial in  the time, number of states and number
of goal states, i.e., $O(T^2|S||\hat{G}|)$.
In the worst case, every component of a global state can be a local goal state.
However, in other cases, $|\hat{G}_i|$ can be much smaller than $|S_i|$ when $\hat{G}_i$ is a strict subset 
of $S_i$, decreasing even more the running time of the algorithm. 
\hfill $\Box$
\end{proof}

\subsection{Experiments - Goal-oriented Options}
\label{ProdExp}

We illustrate the $\mbox{LGO-MSBPI}$ decomposition mechanism in a production control scenario.
We assume that there are two machines, which can control the production of boxes and cereals:
machine $M_1$ can produce two types of boxes $a$ or $b$. The amount of boxes of type $a$ produced
by this machine is denoted by $B_a$ ($B_b$ represents the amount of boxes of type $b$ produced respectively).
Machine $M_2$ can produce two kinds of cereals $a$ and $b$. $C_a$ (and $C_b$ respectively) denotes
the number of bags of cereals of type $a$ (we assume that one bag of cereals is sold in one box of the
same type). The boxes differ in their presentation so that
boxes of type $a$ advertise their content of type $a$ and boxes of type $b$ advertise their content
of type $b$. We assume that at each discrete time $t$, machine $M_1$ may produce one box or no boxes at all, 
and the other machine may produce one bag of cereals or may produce no cereal at all.
This production process is stochastic in the sense that the machines are not perfect: with probability $P_{M_1}$,
machine one succeeds in producing the intended box (either $a$ or $b$) and with probability $1-P_{M_1}$, 
the machine does not produce any box in that particular time unit. Similarly, we assume $P_{M_2}$ 
expresses the probability of machine two producing one bag of cereals of type $a$ or $b$ that 
is required for selling in one box. In this example, 
the reward attained by the system at $T$ is equal to the number of products ready for
sale, i.e., $min\{B_a,C_a\}+min\{B_b,C_b\}$. A product that can be sold is 
composed of one box together with one bag of cereals corresponding to the type
advertised in this box.

A goal-oriented option is given by the number of products that each machine should produce.
Therefore, an option $opt_i$ in this scenario is described by a pair of 
numbers $(X_a,X_b)$ (when $i$ is machine one then $X$ refers to boxes and when
$i$ is machine two, $X$ refers to bags of cereals). That is, machine $i$ is instructed 
to produce $X_a$ items of type $a$, followed by $X_b$ items of 
type $b$, followed by  $X_a$ items of type $a$ and so forth until either the time limit is over or 
anyone of the machines decides to communicate. Once the machines exchange information, the global state is 
revealed, i.e., the current number of boxes and cereals produced so far is known.
Given a set of goal-oriented options, the $\mbox{LGO-MSBPI}$ algorithm returned
the optimal joint policy of action and communication that solves this problem.
We counted the time units that it takes to produce the boxes with cereals.
We compared the locally goal oriented multi-step backup policy iteration algorithm ($\mbox{LGO-MSBPI}$) with two other
approaches: 1) the {\em Ideal} case when machines can exchange information about their state of production
at each time and at no cost. This is an idealized case, since in reality exchanging
information does incur some cost, for example changing the setting of a machine takes
valuable time and 2) the {\em Always Communicate} ad-hoc case, when the machines exchange information
at each time step and they also incur a cost when they do it.
Tables~\ref{AvgUC01}, \ref{AvgUC1}, and \ref{AvgUC10} present the average utility obtained by the 
production system when the cost of communication was set to $-0.1$, $-1$ and 
$-10$ respectively, the cost of a domain action was set to $-1$ and the 
joint utility was averaged over 1000 experiments.
A state is represented by the tuple $(B_a,B_b,C_a,C_b)$.
The initial state was set to (0,0,0,8), there were no boxes produced and there were
already 8 bags of cereals of type $B$.
The finite horizon $T$ was set to 10. The set of goal-oriented options 
$(X_a,X_b)$ tested included (0,1),(1,4),(2,3),(1,1),(3,2),(4,1) and (1,0).

\begin{table}[hbtp]
\begin{small}
\centerline
{
\begin{tabular}{|l|l|l|l|l|l}
\hline
&\multicolumn{3}{c|}{Average Utility} \\ \cline{2-4}\multicolumn{1}{|c|}{$P_{M_1},P_{M_2}$}
&\multicolumn{1}{|c|}{Ideal $C_\Sigma=0$} &\multicolumn{1}{c|}{Always Communicate} &\multicolumn{1}{c|}{LGO-MSBPI} \\ \hline \hline
$0.2,0.2$ & -17.012 & -18.017& -17.7949\\ \hline 
$0.2,0.8$ & -16.999 & -17.94 & -18.0026  \\ \hline
$0.8,0.8$ & -11.003 & -12.01 & -12.446 \\ \hline 
\end{tabular}
}
\caption{$C_{\Sigma}=-0.10$, $R_a=-1.0$.}
\label{AvgUC01}
\end{small}
\end{table}

\begin{table}[hbtp]
\begin{small}
\centerline{
\begin{tabular}{|l|l|l|l|l|l}
\hline
&\multicolumn{3}{c|}{Average Utility} \\ \cline{2-4}\multicolumn{1}{|c|}{$P_{M_1},P_{M_2}$}
& \multicolumn{1}{|c|}{Ideal $C_\Sigma=0$} &\multicolumn{1}{c|}{Always Communicate} &\multicolumn{1}{c|}{LGO-MSBPI} \\ \hline \hline
$0.2,0.2$ & -17.012 & -26.99  & -19.584   \\ \hline 
$0.2,0.8$ & -16.999 & -26.985 & -25.294 \\ \hline
$0.8,0.8$ & -11.003 & -20.995 & -17.908  \\ \hline 
\end{tabular}
}
\caption{$C_{\Sigma}=-1.0$, $R_a=-1.0$.}
\label{AvgUC1}
\end{small}
\end{table}

\begin{table}[hbtp]
\begin{small}
\centerline{
\begin{tabular}{|l|l|l|l|l|l}
\hline
&\multicolumn{3}{c|}{Average Utility} \\ \cline{2-4}\multicolumn{1}{|c|}{$P_{M_1},P_{M_2}$}
& \multicolumn{1}{|c|}{Ideal $C_\Sigma=0$} &\multicolumn{1}{c|}{Always Communicate} &\multicolumn{1}{c|}{LGO-MSBPI} \\ \hline \hline
$0.2,0.2$ & -17.012 & -117      & -17.262 \\ \hline 
$0.2,0.8$ & -16.999 & -117.028  & -87.27 \\ \hline
$0.8,0.8$ & -11.003 & -110.961  & -81.798  \\ \hline 
\end{tabular}
}
\caption{$C_{\Sigma}=-10.0$, $R_a=-1.0$.}
\label{AvgUC10}
\end{small}
\end{table}

The $\mbox{LGO-MSBPI}$ algorithm computed a mechanism that resulted in three
products on average when the uncertainty of at least one machine was set to 
$0.2$ and 1000 tests were run, each for ten time units. The number of products increased on average 
between 8 to 9 products when the machines succeeded $80\%$ of the cases. These 
numbers of products were always attained either when the 
decomposition mechanism was implemented or when the ad-hoc approaches were tested.
Ideal or Always Communicate algorithms only differ with respect to the cost of communication,
and they do not differ in the actual policies of action.
Although the machines incur a higher cost when the mechanism is applied compared to the ideal case 
(due to the cost of communication), the number of final products ready to sell were almost the same 
amount. That is, it will take some more time in order to produce the right amount 
of products when the policies implemented are those computed by the locally goal oriented 
multi-step backup policy iteration algorithm. The cost of communication in this scenario
can capture the cost of changing the setting of one machine from one production program to 
another. Therefore, our result is significant when this cost of communication is very high 
compared to the time that the whole process takes. The decomposition mechanism finds
what times are most beneficial to synchronize information when constant communication
is not feasible nor desirable due to its high cost.

\subsection{Generalization of the $\mbox{LGO-MSBPI}$ Algorithm}
\label{LGO-nonindependent}

The mechanism approach assumes that agents can operate independent of each
other for some period of time. However, if the decentralized process has some 
kind of dependency in its observations or transitions, this assumption will be 
violated, i.e., the plans to reach the local goals can interfere with each 
other (the local goals may not be compatible). The $\mbox{LGO-MSBPI}$ 
algorithm presented in this paper can be applied to Dec-MDPs when their
transitions and observations are not assumed to be independent.
In this section, we bound the error in the utilities of the options computed 
by the $\mbox{LGO-MSBPI}$ algorithms when such dependencies do exist.
We define $\Delta\!-\!independent$ decentralized processes
to refer to {\em nearly-independent} processes whose dependency can be quantified
by the cost of their marginal interactions.

\begin{definition}[$\Delta\!-\!independent$ Process]
\label{Deltaindependence}
Let $\overline{C}_{A_i}(s \rightarrow \hat{g}_k|\hat{g}_j)$ be the expected cost incurred by
agent $i$  when following its optimal local policy to reach local goal state $\hat{g}_k$ 
from state $s$, while the other agent is following its optimal policy to reach 
$\hat{g}_j$.
A decentralized control process is $\Delta\!-\!independent$ if $\Delta=\max\{\Delta_1,\Delta_2\}$, where
$\Delta_1$ and $\Delta_2$ are defined as follows: $\forall \hat{g}_1,\hat{g}'_1 \in \hat{G}_1 \in S_1$, $\hat{g}_2,\hat{g}'_2 \in \hat{G}_2 \in S_2$ and $s\in S$:
\[\Delta_1 = \max_s\{\max_{\hat{g}_1}\{\max_{\hat{g}_2,\hat{g}'_2}\{\overline{C}_{A_1}(s^0\rightarrow \hat{g}_1|\hat{g}'_2)-\overline{C}_{A_1}(s^0\rightarrow \hat{g}_1|\hat{g}_2)\}\}\}\]
\[\Delta_2 = \max_s\{\max_{\hat{g}_2}\{\max_{\hat{g}_1,\hat{g}'_1}\{\overline{C}_{A_2}(s^0\rightarrow \hat{g}_2|\hat{g}'_1)-\overline{C}_{A_2}(s^0 \rightarrow \hat{g}_2|\hat{g}_1)\}\}\}\]

That is, $\Delta$ is the maximal difference in cost that an agent may incur when trying to
reach one local goal state that interferes with any other possible local goal being 
reached by the other agent.
\end{definition}

The computation of the cost function $\overline{C}_{A_i}(s \rightarrow \hat{g}_k|\hat{g}_j)$ 
is domain dependent. We do not address the issue of how to compute this cost but we provide the 
condition. The individual costs of one agent can be affected by the interference that exists between some 
pair of local goals. For example, assume a 2D grid scenario: one agent can move only in four 
directions (north, south, east and west) and needs to reach location (9,9) from (0,0). The second 
agent is able of moving and also of collecting rocks and blocking squares in the grid.
Assuming that the second agent is assigned the task of blocking all the squares
in even rows, then the first agent's solution to its task is constrained
by the squares that are free to cross. In this case, agent one's cost to reach
(9,9) depends on the path it will choose that depends very strongly on the 
state of the grid resulting from the second agent's actions.

The $\Delta$ value denotes the amount of interference that might occur between the agents' 
locally goal-oriented behaviors. When the Dec-MDP has independent transitions and observations, 
the value of $\Delta$ is zero. The $\mbox{LGO-MSBPI}$ algorithm proposed in this paper computes 
the mechanism for each global state as a mapping from states to pairs of local goal states 
ignoring the potential interference. Therefore, the difference between the actual cost that will 
be incurred by the options found by the algorithm and the optimal options can be at most $\Delta$. Since 
the mechanism is applied for each global state for $T$ time steps and this loss in cost can 
occur in the worst case for both agents, the algorithm presented here is 
$2T\Delta\!-\!optimal$ in the general case.

\section{A Myopic-greedy Approach to Direct Communication}
\label{greedy}

In some cases, it is reasonable to assume that {\em single}-agent 
behaviors are already  known and fixed, ahead of time for any possible global
state. For example, this may occur in settings 
where individual agents are designed ahead of the coordination time (e.g., agents in a 
manufacturing line can represent machines, which are built specifically to implement certain procedures).
To achieve coordination, though, some additional method may be needed to synchronize these
individual behaviors. In this section, we present how to apply the communication-based decomposition
approach to compute the policy of communication that will synchronize the {\em given} goal-oriented options.
We take a myopic-greedy approach that runs in polynomial-time: i.e., each time 
an agent makes a decision, it chooses the action with maximal expected 
accumulated reward assuming that agents are only able to communicate once along the whole process.
Notice that the $\mbox{LGO-MSBPI}$ was more general in the sense that it also computed {\em what} local goals 
should be pursued by each agent together with the communication policy that 
synchronizes their individual behaviors. 
Here, each time the agents exchange information, the mechanism is applied 
inducing
two individual behaviors (chosen from the given mapping from states to individual behaviors). 
The given optimal policies of action (with no communication actions) are denoted
$\delta_1^{A*}$ and $\delta_2^{A*}$ respectively. 

The expected global reward of the system, given that the agents {\bf do not 
communicate at all} and each follows its corresponding optimal policy $\delta_i^{A*}$
is given by the value of the initial state $s^0$: 
$\Theta^\delta_{nc}(s^0,\delta_1^{A*},\delta_2^{A*})$. This value can be 
computed by summing over all possible next states and computing the probability
of each agent reaching it, the reward obtained then and the recursive value
computed for the next states.
\begin{small}
\[\Theta^\delta_{nc}(s^0,\delta_1^{A*},\delta_2^{A*})=\]
\[\sum_{(s'_1,s'_2)}P_1(s'_1|s^0_1,\delta_1^{A*}(s^0_1))\cdot P_2(s'_2|s^0_2,\delta_2^{A*}(s^0_2))(R(s^0,\delta_1^{A*}(s^0_1),\delta_2^{A*}(s^0_2),s')+\Theta^\delta_{nc}(s' ,\delta_1^{A*},\delta_2^{A*}))\]
\end{small}
We denote the expected cost of the system computed by agent $i$, when
the last synchronized state is $s^0$, and {\bf when the agents communicate once} at state $s$ 
and continue without any communication,
$\Theta_{c}(s^0,s_i,\delta_1^{A*},\delta_2^{A*})$:
\begin{small}
\[\Theta_{c}(s^0,s_1,\delta_1^{A*},\delta_2^{A*})=\]
\[\sum_{s_2} \overline{P_2}(s_2|s^0_2,\delta_2^{A*})(\overline{R}(s^0,\delta_1^{A*}(s^0_1),\delta_2^{A*}(s_2^0),(s_1,s_2))+\Theta^\delta_{nc}((s_1,s_2),\delta_1^{A*},\delta_2^{A*})+C_\Sigma\cdot Flag)\]
\end{small}
Flag is zero if the agents reached the global goal state before they reached 
state $s$. The time stamp in state $s$ is denoted $t(s)$. 
$\overline{P}(s|,s^0,\delta^{A*}_1,\delta^{A*}_2)$ is the probability 
of reaching state $s$ from state $s^0$, following the given policies of action.
\begin{small}
\[\overline{P}(s'|s,\delta^{A*}_1,\delta^{A*}_2) = \left\{ \begin{array}{ll}
                      1            &\mbox{$if \ s=s'$}\\
                      P(s'|s,\delta^{A*}_1(s_1),\delta^{A*}_2(s_2)) &\mbox{$if \ t(s')=t(s)+1$}\\
                      0           &\mbox{$if \ t(s') < t(s) + 1$} \\
                      \sum_{s^{''}\in S} \overline{P}(s'|s^{''},\delta^{A*}_1,\delta^{A*}_2)\cdot P(s^{''}|s,\delta^{A*}_1,\delta^{A*}_2)       &\mbox{$otherwise$}      
                     \end{array}
            \right. \]
\end{small}
Similarly, $\overline{P_1}$ ($\overline{P_2}$) can be defined for the 
probability of reaching $s'_1$ ($s'_2$), given agent $1$ ($2$)'s current 
partial view $s_1$ ($s_2$) and its policy of action $\delta_1^{A*}$ ($\delta_2^{A*}$).
The accumulated reward attained while the agents move from state $s^0$ to state $s$ is given as follows:
\begin{small}
\[\overline{R}(s^0,\delta^{A*}_1,\delta^{A*}_2,s) = \left\{ \begin{array}{ll}
                      R(s^0,\delta^{A*}_1(s^0_1),\delta^{A*}_2(s^0_2),s) &\mbox{$if \ t(s)=t(s^0)+1$}\\
\ &\mbox{$if \ t(s) > t(s^0) + 1$} \\
                      \sum_{s^{''}}\overline{P}(s^{''}|\delta^{A*}_1,\delta^{A*}_2,s^0)\cdot P(s|\delta^{A*}_1,\delta^{A*}_2,s^{''})\cdot\\
(\overline{R}(s^0,\delta^{A*}_1,\delta^{A*}_2,s^{''})+R(s^{''},\delta^{A*}_1(s_1^{''}),\delta^{A*}_2(s_2^{''}),s)) 
                     \end{array}
            \right. \]
\end{small}
At each state, each agent decides whether to communicate its partial view or
not based on whether the expected cost from following the policies of action,
and having communicated is larger or smaller than the expected cost from
following these policies of action and not having communicated.

\begin{lemma}
\label{greedyP}
Deciding a Dec-MDP-Com with the myopic-greedy approach to direct 
communication is in the P class.
\end{lemma}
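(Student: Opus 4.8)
The plan is to exhibit an explicit polynomial-time algorithm that evaluates the myopic-greedy communication rule and then argue that each of its components runs in time polynomial in the size of the Dec-MDP-Com description (the number of global states $|S|$, the horizon $T$, and the number of actions). Since the local policies of action $\delta_1^{A*}$ and $\delta_2^{A*}$ are given as part of the input, there is nothing to optimize over in the action dimension; the only decision at each step is the binary choice of whether or not to communicate, and the greedy rule fixes that decision as a comparison between two precomputed quantities. So the task reduces to showing that $\Theta^\delta_{nc}$, $\overline{P}$, $\overline{R}$, and $\Theta_c$ can all be computed in polynomial time.

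First I would observe that $\Theta^\delta_{nc}(s,\delta_1^{A*},\delta_2^{A*})$ is defined by a backward recursion over the finite horizon: for states at the last time step the value is immediate, and the value at an earlier state is a sum over at most $|S|$ successor states of a product of the two (independent) local transition probabilities times the immediate reward plus the recursively computed successor value. Filling in this table by dynamic programming, from $t=T$ down to $t=0$, touches each state once and does $O(|S|)$ work per state, hence $O(|S|^2)$ arithmetic operations overall — polynomial. The auxiliary quantities $\overline{P}(s'|s,\delta^{A*}_1,\delta^{A*}_2)$ and $\overline{R}(s^0,\delta^{A*}_1,\delta^{A*}_2,s)$ are similarly defined by forward recursions stratified by time stamp $t(s)$, and the same layered dynamic-programming argument gives a polynomial bound; the independence of transitions lets us, if desired, factor these through the marginal quantities $\overline{P_1},\overline{P_2}$, which only makes the bookkeeping cheaper. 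Then $\Theta_c(s^0,s_i,\delta_1^{A*},\delta_2^{A*})$ is a single sum over the other agent's local states of already-tabulated $\overline{P_2}$, $\overline{R}$, and $\Theta^\delta_{nc}$ values, so it too is polynomial once the tables are in hand.

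Finally, the decision procedure itself: at the state $s$ currently under consideration, agent $i$ compares the expected cost of communicating now (an expression involving $\Theta_c$ and the communication cost $C_\Sigma$, with the $Flag$ term handled by checking whether the global goal was already reached) against the expected cost of never communicating ($\Theta^\delta_{nc}$ restricted appropriately), and communicates iff the former is the smaller. Because "myopic-greedy" here means the lookahead assumes at most one communication event for the remainder of the process, no recursion over communication choices is needed — the two sides of the comparison are exactly the precomputed values — so evaluating the rule at every state adds only an $O(|S|)$ pass. Summing the costs of the dynamic-programming phases and the decision phase yields a total polynomial in $|S|$ and $T$, which places the problem in P.

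The step I expect to be the main obstacle is making the forward recursions for $\overline{P}$ and $\overline{R}$ genuinely well-founded and polynomial: their defining equations are written with a recursive case that sums over \emph{all} intermediate states $s''$, and one must be careful that the recursion is strictly decreasing in the time-stamp difference $t(s')-t(s)$ (so that it bottoms out after at most $T$ levels) rather than circular. Pinning down that stratification — and confirming that the independence assumption lets the joint quantities be assembled from the single-agent marginals without blowing up the state space — is where the real care lies; the rest is routine dynamic-programming accounting.
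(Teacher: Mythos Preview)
Your proposal is correct and follows essentially the same approach as the paper: the paper's proof simply observes that the given action policies (or, if only local goals are supplied, the polynomially-computable single-agent optimal policies) fix the action dimension, and that computing $\Theta^\delta_{nc}$ and $\Theta_c$ for each of the $|S|$ states via the dynamic-programming recursions is polynomial in $|S|$. Your write-up is considerably more detailed than the paper's three-sentence argument, and your flagged concern about the time-stamp stratification of $\overline{P}$ and $\overline{R}$ is legitimate bookkeeping but not a gap---the recursions are indeed layered by $t(s)$ and bottom out after at most $T$ steps.
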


\begin{proof}
Each agent executes its known policy $\delta^{A*}_i$ when the mechanism is applied. If local
goals are provided instead of actual policies, finding the optimal single-agent policies 
that reach those goal states can be done in polynomial time. 
The complexity of finding the communication policy is the same as 
dynamic programming (based on the formulas above), therefore computing the policy of communication is also in P.
There are $|S|$ states for which $\Theta^\delta_{nc}$ and $\Theta_c$ need to be computed, and each one of these 
formulas can be solved in time polynomial in $|S|$.
\hfill $\Box$
\end{proof}

In previous work, we have also studied the set of {\em monotonic} goal-oriented Dec-MDPs, 
for which we provide an algorithm that finds the optimal policy of communication assuming a
set of individual behaviors is provided~\cite{Goldman04e}.

\subsection{Meeting Under Uncertainty Example}
\label{example}

We present empirical results obtained when the myopic-greedy approach was
applied to the Meeting under Uncertainty example\footnote{Some of the empirical results in this section were described first by~\citeauthor{Goldman03a}~\citeyear{Goldman03a}.}. 
The testbed we consider is a sample problem of a Dec-MDP-Com involving two
agents that have to meet at some location as early as possible.  This scenario is also known
as the gathering problem in robotics~\cite{Suzuki99}. The
environment is represented by a 2D grid with discrete locations.
In this example, any global state that can be occupied by both agents is considered a global 
goal state. The set of control actions includes moving North, South, East and West, and 
staying at the same location. Each agent's partial view (which is locally fully-observable) 
corresponds to the agent's location coordinates. The observations and the transitions are 
independent. The outcomes of the agents' actions are uncertain: that is,
with probability $P_i$, agent $i$ arrives at the desired
location after having taken a move action, but with probability $1-P_i$ the
agent remains at the same location. Due to this uncertainty in the effects of
the agents' actions, it is not clear that setting a predetermined meeting point
is the best strategy for designing these agents. Agents may be able to meet faster if they
change their meeting place after realizing their actual locations. This can be achieved by
exchanging information on the locations of the agents, that otherwise are
not observable. We showed that exchanging the last
observation guarantees optimality in a Dec-MDP-Com process with constant message costs~\cite{Goldman04c}.
In the example tested, the messages exchanged correspond to the agents' own observations, i.e., 
their location coordinates. 

We have implemented the locally goal-oriented mechanism that assigns a single local goal to
each agent at each synchronized state. These local goals were chosen as
the location in the middle of the shortest Manhattan path between the agents' locations (this distance is 
revealed when information is exchanged). 

Intuitively, it is desirable for a mechanism to set a meeting place in the middle
of the shortest Manhattan path that connects the two agents because in the
absence of communication, the cost to meet at that point is minimal.
This can be shown by computing the expected time to meet, $\Theta_{nc}$,
for any pair of possible distances between the two agents and any location in
the grid, when no communication is possible. 
To simplify the exposition, we use a function
that takes advantage of the specific characteristics of the example.
The notation is as follows: agent $1$ is at distance $d_1$ from the meeting
location, agent $2$ is at distance $d_2$ from that location, the system
incurs a cost of one at each time period if the agents have not met yet.
If both agents are at the meeting location, the 
expected time to meet
is zero, $\Theta_{nc}(0,0)=0$. If only agent $2$ is at the meeting location, but
agent $1$ has not reached that location yet, then the 
expected time to meet is given by
\begin{small}
\[\Theta_{nc}(d_1,0)=P_1\cdot (-1+\Theta_{nc}(d_1\!-\!1,0))+(1\!-\!P_1)\cdot (-1+\Theta_{nc}(d_1,0))=\]
\[\ \ \ \ =P_1\cdot \Theta_{nc}(d_1\!-\!1,0))+(1\!-\!P_1)\cdot \Theta_{nc}(d_1,0))-1\]
\end{small}
That is, with probability $P_1$ agent $1$ succeeds in decreasing its distance to
the meeting location by one, and with probability $1-P_1$ it fails and remains
at the same location. Recursively, we can compute the remaining 
expected time to meet with the updated parameters. Similarly for agent $2$:
$\Theta_{nc}(0,d_2)=P_2\cdot (-1+\Theta_{nc}(0,d_2\!-\!1))+(1\!-\!P_2)\cdot (-1\!+\!\Theta_{nc}(0,d_2)).$
If none of the agents has reached the meeting place yet, then there are
four different cases in which either both, only one, or none succeeded in
moving in the right direction and either or not decreased their distances
to the meeting location respectively:
\begin{small}
\[\Theta_{nc}(d_1,d_2)= P_1 \cdot P_2\cdot (-1+\Theta_{nc}(d_1\!-\!1,d_2\!-\!1))+P_1\cdot (1\!-\!P_2)\cdot (-1+\Theta_{nc}(d_1\!-\!1,d_2))+\]
\[\ \ \ \ + (1\!-\!P_1)\cdot P_2\cdot (-1+\Theta_{nc}(d_1,d_2\!-\!1))+(1\!-\!P_1)\cdot (1\!-\!P_2)\cdot (-1+\Theta_{nc}(d_1,d_2))=\]
\[\ \ \ \ = P_1 \cdot P_2\cdot \Theta_{nc}(d_1\!-\!1,d_2\!-\!1)+P_1\cdot (1\!-\!P_2)\cdot \Theta_{nc}(d_1\!-\!1,d_2)+ (1\!-\!P_1)\cdot P_2\cdot \Theta_{nc}(d_1,d_2\!-\!1)+\]
\[+(1\!-\!P_1)\cdot (1\!-\!P_2)\cdot \Theta_{nc}(d_1,d_2)-1\]
\end{small}
The value of $\Theta_{nc}(d_1,d_2)$ was computed for all possible distances $d_1$ and $d_2$ in
a 2D grid of size $10\times 10$. The minimal expected time to meet
was obtained when $d_1=d_2=9$ and the expected cost was $-12.16$.

In summary, approximating the optimal solution to the Meeting under Uncertainty
example when direct communication is possible and the mechanism applied is the
one described above will unfold as follows: At time $t_0$, the initial
state of the system $s^0$ is fully observable by both agents. The agents set a
meeting point in the middle of a Manhattan path that connects them.
Denote by $d_0$ the distance between the agents
at $t_0$ and $g_{t_0}=(g^1_{t_0},g^2_{t_0})$
the goal state set at $t_0$. Each one of the agents can move optimally towards
its corresponding component of $g_{t_0}$.
Each agent moves independently in the environment because the transitions and
observations are independent. Each time $t$, when the policy of
communication instructs an agent to initiate exchange of information, the
current Manhattan distance between the agents $d_t$ is revealed to both.
Then, the mechanism is applied, setting a possibly new goal state
$g_t$, which decomposes into two components one for each agent. This
goal state $g_t$ is in the middle of the Manhattan path that connects the
agents with length $d_t$ revealed through communication.

\subsection{Experiments - Myopic-greedy Approach}

In the following experiments, we assumed that the transition probabilities $P_1$ and $P_2$
are equal. These uncertainties were specified by the parameter $P_u$. The mechanism that is applied 
whenever the agents communicate at time $t$ results in each agent adopting a 
local goal state, that is set at the location in the middle of the Manhattan 
path connecting the agents (the Manhattan distance between the agents is 
revealed at time $t$).
We compare the joint utility attained by the system in the following four different 
scenarios:
\begin{enumerate}
\cutwspace
\item No-Communication --- The meeting point is fixed at time $t_0$ and 
remains fixed along the simulation. It is located in the middle of the 
Manhattan path that connects between the agents, known at time $t_0$. Each 
agent follows its optimal policy of action to this location without communicating.
\item Ideal --- This case assumes that the agents can communicate freely ($C_\Sigma=0$)
at every time step resulting in the highest global utility that both agents can attain.
Notice, though, that this is not the optimal solution we are looking for, 
because we do assume that communication is not free. Nevertheless, the 
difference in the utility obtained in these first two cases shed light on the 
trade-off that can be achieved by implementing non-free communication policies.
\item Communicate SubGoals --- A heuristic solution to the problem, 
which assumes that the agents have a notion of sub-goals. They notify each 
other when these sub-goals are achieved, eventually leading the agents to meet.
\item Myopic-greedy Approach --- Agents act myopically optimizing the choice of when
to send a message, assuming no additional communication is possible.
For each possible distance between the agents, a policy of communication is
computed such that it stipulates the best time to send that message.
By iterating on this policy agents are able to communicate more than once
and thus approximate the optimal solution to the decentralized control
problem with direct communication. The agents continue moving until they meet.
\end{enumerate}


The solution to the No-Communication case can be solved analytically for the 
Meeting under Uncertainty example, by computing the expected cost
$\Theta_{nc}(d_1,d_2)$ incurred by two agents 
located at distances $d_1$ and $d_2$ respectively from the goal state at time 
$t_0$ (the complete mathematical solution appears in Section~\ref{example}).
In the Ideal case, a set of 1000 experiments was run with cost of communication set to 
zero. Agents communicate their locations 
at every time instance, and update the location of the meeting place 
accordingly. Agents move optimally to the last synchronized meeting location.

For the third case tested (Communicate SubGoals) a sub-goal was defined by 
the cells of the grid with distance equal to $p\cdot d/2$ and with center located
at $d/2$ from each one of the agents. 
$p$ is a parameter of the problem that determines the radius 
of the circle that will be considered a sub-goal. Each time an agent reaches
a cell inside the area defined as a sub-goal, it initiates exchange of
information (therefore, $p$  induces the communication strategy). 
$d$ expresses the Manhattan distance between the two 
agents, this value is accurate only when the agents synchronize their 
knowledge. That is, at time $t_0$ the agents determine the first sub-goal as the
area bounded by a radius of $p\cdot d_0/2$ and, which center is located at $d_0/2$ 
from each one of the agents. Each time $t$ that the agents synchronize their 
information through communication, a new sub-goal is determined at $p\cdot d_{t}/2$.
Figure~\ref{CommSubG} shows how new sub-goals are set when the agents transmit 
their actual location once they reached a sub-goal area. The meeting point
is dynamically set at the center of the sub-goal area.

\begin{figure}[t]
\centerline{\includegraphics[height=4cm]{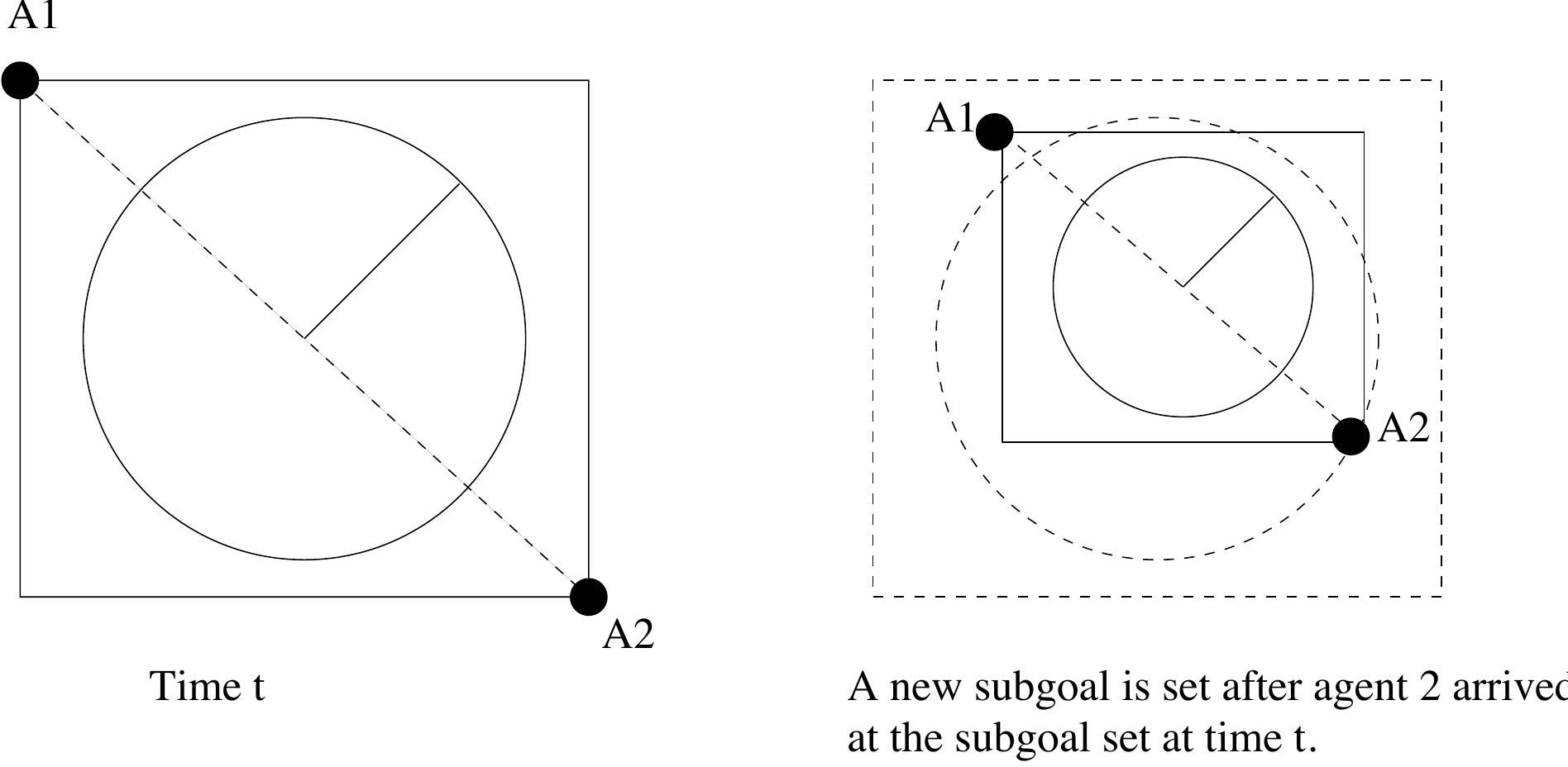}}
\caption{Goal decomposition into sub-goal areas.}
\label{CommSubG}
\end{figure}

Experiments were run for the Communicate SubGoals case for different 
uncertainty values, values of the parameter $p$ and costs of communication (for each
case, 1000 experiments were run and averaged). 
These results show that agents can obtain
higher utility by adjusting the meeting point dynamically rather than having 
set one fixed meeting point. Agents can synchronize their knowledge and thus
they can set a new meeting location instead of acting as two independent
MDPs that do not communicate and move towards a fixed meeting point (see 
Figure \ref{AvgUPr08}). Nevertheless, for certain values of $p$, the joint 
utility of the agents is actually smaller than the joint utility achieved in 
the No-Communication case (2 MDPs in the figure). 
This points out the need to empirically tune up the parameters required by the 
heuristic. 

\begin{figure}[t]
\centerline{\includegraphics[height=7.5cm]{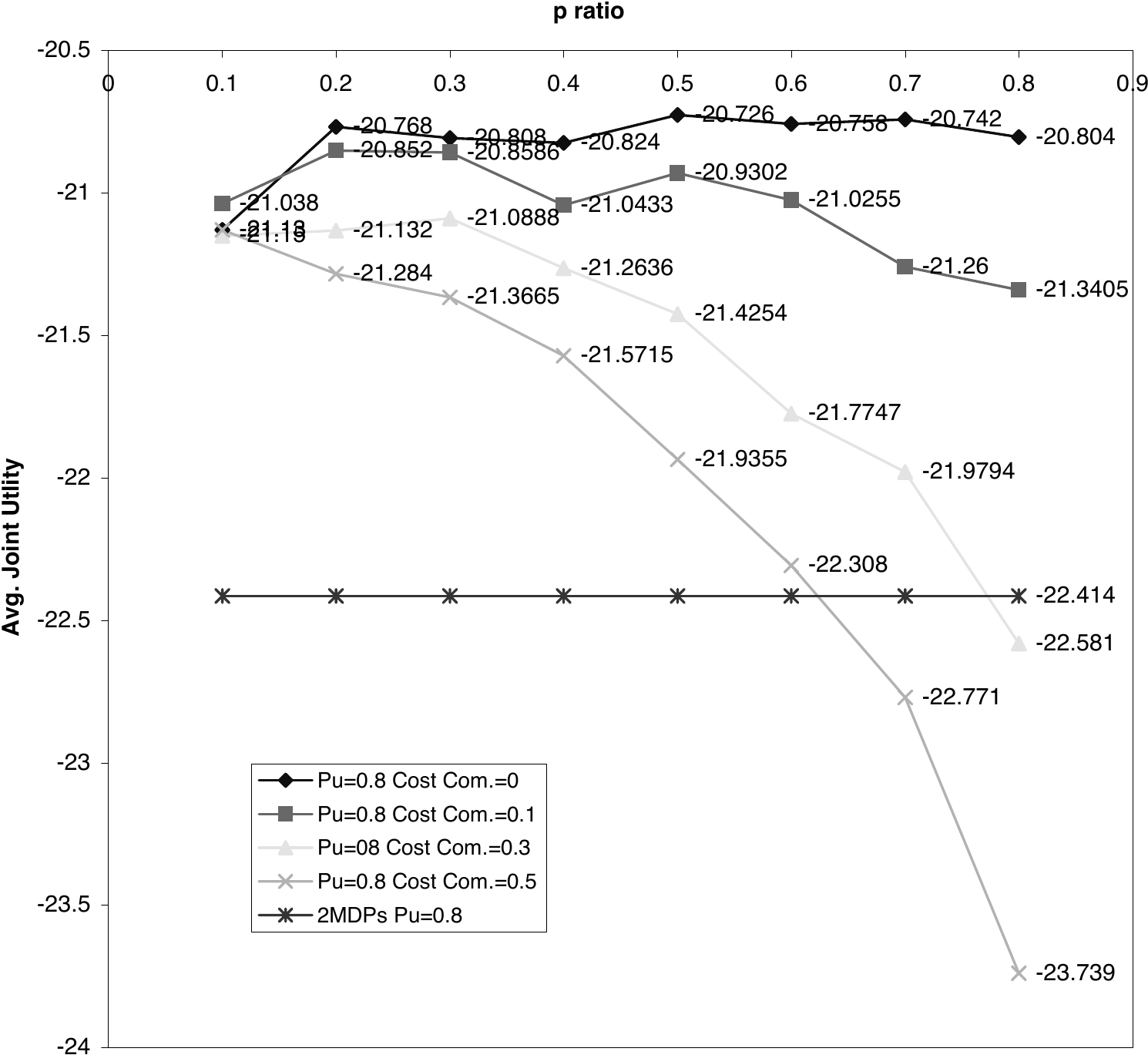}}
\caption{The average joint utility obtained when sub-goals are communicated.}
\label{AvgUPr08}
\end{figure}

In the Myopic-greedy case, we design the agents to optimize the time when
they should send a message, assuming that they can communicate only once.
At the off-line planning stage, the agents compute their expected joint 
cost to meet for any possible state of the system ($s^0$) and time t (included in the local 
state $s_i$), $\Theta_c(s^0,s_i,\delta_1^{A*},\delta_2^{A*})$. The global states revealed through 
communication correspond to the possible distances between the agents. Each time the agents
get synchronized, the mechanism is applied assigning local goals and instructing the agents
to follow the optimal local policies to achieve them.
In the Meeting under Uncertainty scenario we study, $\Theta_c$ 
is the expected joint cost incurred by taking control actions during $t$ time steps, 
communicating then at time $t+1$ if the agents have not met so far, and continuing with the 
optimal policy of control actions without communicating towards the goal state (the meeting
location agreed upon at $t+1$) at an expected cost of $\Theta_{nc}(d_1,d_2)$ as computed for 
the No-Communication case. When the agents meet before the $t$ time steps have elapsed, they only
incur a cost for the time they act before they met.

At each time $t$, each one of the agents knows a meeting location, that is
the goal location computed from the last exchange of information. Consequently,
each agent moves optimally towards this goal state. In addition, the 
myopic-greedy policy of communication is found by computing the earliest time 
$t$, 
for which $\Theta_c(d_1+d_2,s_1,\delta_1^{A*},\delta_2^{A*}) < \Theta_{nc}(d_1,d_2)$, that is, what is the 
best time to communicate such that the expected cost to meet is the least. The 
myopic-greedy policy of communication is a vector that states the time to communicate for each 
possible distance between the agents.

Tables~\ref{notationC01R1}, \ref{notationC1R1}, and \ref{notationC10R1} present
the myopic-greedy communication policies computed for the Meeting under 
Uncertainty problem with $P_u$ values taken from 
$\{0.2, 0.4, 0.6, 0.8\}$.  The cost of taking a control action is $R_a=-1.0$ 
and the costs of communicating $C_\Sigma$ tested were $\{-0.1, -1.0,-10.0\}$. 
Each row corresponds to a configuration tested with different state-transition uncertainties.
Each column corresponds to a synchronized state, given by the possible
Manhattan distance between the agents moving in a  2D grid of size 10x10.
Given a certain value for $P_u$ and a certain global distance, each agent interprets the value
in any entry as the next time to communicate its position.
Time is reset to zero when the agents exchange information.
As long as the distance between the agents is larger and the communication cost increases,
the policy instructs the agents to communicate later, i.e., the agents should keep operating
until the information exchanged will have a better effect on the rescheduling of the
meeting place.

\begin{table}[hbtp]
\begin{small}
\centerline{
\begin{tabular}{|l|l|l|l|l|l|l|l|l|l|l|l|l|l|l|l|l|l|l|}
\hline
&\multicolumn{18}{c|}{$d_0$=distance between agents when last synchronized, $g$ located at $d_0/2$} \\ \cline{2-19}\multicolumn{1}{|c|}{$P_u$}
& \multicolumn{1}{|c|}{1}&\multicolumn{1}{c|}{2} &\multicolumn{1}{c|}{3} &\multicolumn{1}{c|}{4} &\multicolumn{1}{c|}{5} &\multicolumn{1}{c|}{6} &\multicolumn{1}{c|} {7} &\multicolumn{1}{c|}{8} &\multicolumn{1}{c|}{9} &\multicolumn{1}{c|}{10}&\multicolumn{1}{c|}{11}&\multicolumn{1}{c|}{12}&\multicolumn{1}{c|}{13}&\multicolumn{1}{c|}{14}&\multicolumn{1}{c|}{15}&\multicolumn{1}{c|}{16}&\multicolumn{1}{c|}{17}&\multicolumn{1}{c|}{18} \\ \hline \hline
$0.2$ &2 &3 &2 &3 &2 &3 &2 &3 &2 &3 &2 &3 &2 &3 &2 &3 &2 &3 \\ \hline
$0.4$ &2 &2 &2 &3 &2 &3 &2 &3 &2 &3 &2 &3 &2 &3 &2 &3 &2 &3 \\ \hline
$0.6$ &2 &2 &2 &3 &2 &3 &2 &3 &2 &3 &2 &3 &2 &3 &2 &3 &2 &3 \\ \hline
$0.8$ &2 &2 &2 &3 &2 &4 &2 &4 &2 &4 &2 &4 &2 &4 &2 &4 &2 &4 \\ \hline
\end{tabular}
}
\caption{Myopic-greedy policy of communication: $C_{\Sigma}=-0.1$, $R_a=-1.0$.}
\label{notationC01R1}
\end{small}
\end{table}

\begin{table}[hbtp]
\begin{small}
\centerline{
\begin{tabular}{|l|l|l|l|l|l|l|l|l|l|l|l|l|l|l|l|l|l|l|}
\hline
&\multicolumn{18}{c|}{$d_0$=distance between agents when last synchronized, $g$ located at $d_0/2$} \\ \cline{2-19}\multicolumn{1}{|c|}{$P_u$}
& \multicolumn{1}{|c|}{1}&\multicolumn{1}{c|}{2} &\multicolumn{1}{c|}{3} &\multicolumn{1}{c|}{4} &\multicolumn{1}{c|}{5} &\multicolumn{1}{c|}{6} &\multicolumn{1}{c|} {7} &\multicolumn{1}{c|}{8} &\multicolumn{1}{c|}{9} &\multicolumn{1}{c|}{10}&\multicolumn{1}{c|}{11}&\multicolumn{1}{c|}{12}&\multicolumn{1}{c|}{13}&\multicolumn{1}{c|}{14}&\multicolumn{1}{c|}{15}&\multicolumn{1}{c|}{16}&\multicolumn{1}{c|}{17}&\multicolumn{1}{c|}{18} \\ \hline \hline
$0.2$ &3 &4 &3 &5 &3 &6 &4 &7 &4 &7 &5 &7 &5 &8 &5 &8 &6 &9 \\ \hline
$0.4$ &2 &3 &3 &4 &4 &5 &4 &6 &5 &7 &5 &7 &6 &8 &6 &8 &7 &9 \\ \hline
$0.6$ &2 &2 &3 &4 &4 &5 &5 &6 &6 &7 &6 &8 &7 &8 &7 &9 &8 &10 \\ \hline
$0.8$ &2 &2 &3 &3 &4 &4 &5 &5 &6 &6 &7 &7 &8 &8 &9 &9 &10 &10 \\ \hline
\end{tabular}
}
\caption{Myopic-greedy policy of communication: $C_{\Sigma}=-1.0$, $R_a=-1.0$.}
\label{notationC1R1}
\end{small}
\end{table}

\begin{table}[hbtp]
\begin{small}
\centerline{
\begin{tabular}{|l|l|l|l|l|l|l|l|l|l|l|l|l|l|l|l|l|l|l|}
\hline
&\multicolumn{18}{c|}{$d_0$=distance between agents when last synchronized, $g$ located at $d_0/2$} \\ \cline{2-19}\multicolumn{1}{|c|}{$P_u$}
& \multicolumn{1}{|c|}{1}&\multicolumn{1}{c|}{2} &\multicolumn{1}{c|}{3} &\multicolumn{1}{c|}{4} &\multicolumn{1}{c|}{5} &\multicolumn{1}{c|}{6} &\multicolumn{1}{c|} {7} &\multicolumn{1}{c|}{8} &\multicolumn{1}{c|}{9} &\multicolumn{1}{c|}{10}&\multicolumn{1}{c|}{11}&\multicolumn{1}{c|}{12}&\multicolumn{1}{c|}{13}&\multicolumn{1}{c|}{14}&\multicolumn{1}{c|}{15}&\multicolumn{1}{c|}{16}&\multicolumn{1}{c|}{17}&\multicolumn{1}{c|}{18} \\ \hline \hline
$0.2$ &9 &9 &11 &13 &14 &17 &18 &20 &21 &23 &25 &27 &28 &30 &32 &34 &35 &37 \\ \hline
$0.4$ &5 &6 &7 &8 &9 &10 &11 &12 &13 &14 &15 &16 &17 &18 &19 &20 &21 &22 \\ \hline
$0.6$ &4 &4 &5 &6 &6 &7 &8 &9 &9 &10 &11 &12 &12 &13 &14 &15 &15 &16 \\ \hline
$0.8$ &3 &3 &4 &4 &5 &5 &6 &7 &7 &8 &8 &9 &10 &10 &11 &11 &12 &12 \\ \hline
\end{tabular}
}
\caption{Myopic-greedy policy of communication: $C_{\Sigma}=-10.0$, $R_a=-1.0$.}
\label{notationC10R1}
\end{small}
\end{table}

For the smallest cost of communication tested, it is always beneficial to communicate rather
early, no matter the uncertainty in the environment, and almost no matter
what $d_0$ is. 
For larger costs of communication and a given $P_u$, the larger the distance between
the agents, the later they  will communicate (e.g., when $P_u=0.4$, $C_\Sigma=-1$ 
and $d=5$, agents should communicate at time 4, but if $C_\Sigma=-10$, they 
should communicate at time 9).
For a given $C_\Sigma$, the larger the distance between the agents is, the later
the agents will communicate (e.g., when $P_u=0.4$, $C_\Sigma=-10$ 
and $d=5$, agents should communicate at time 9, but if $d=12$, they should
communicate at time 16).
The results from averaging over 1000 runs show that for a given cost $C_\Sigma$
as long as $P_u$ decreases (the agent is more uncertain about its actions'
outcomes), the agents communicate more times.

In the 1000 experiments run, the agents exchange information about their 
actual locations at the best time that was myopically found for $d_0$ (known to both at
time $t_0$). After they communicate, they know the actual distance $d_t$, between them.
The agents follow the same myopic-greedy communication policy to 
find the next time when they should communicate if they did not meet already. This 
time is the best time found by the myopic-greedy algorithm given that 
the distance between the agents was $d_{t}$.
Iteratively, the agents approximate the optimal solution to the decentralized
control problem with direct communication by following their independent 
optimal policies of action, and the myopic-greedy policy for communication.
Results obtained from averaging the global utility attained after 1000
experiments show that these myopic-greedy agents can perform better than 
agents who communicate sub-goals (that was shown already to be more efficient than not 
communicating at all). 

\begin{table}[hbtp]
\begin{small}
\centerline{
\begin{tabular}{|l|l|l|l|l|l|}
\hline
&\multicolumn{4}{c|}{Average Joint Utility} \\ \cline{2-5}\multicolumn{1}{|c|}{$P_u$}
& \multicolumn{1}{|c|}{No-Comm.}&\multicolumn{1}{c|}{Ideal $C_\Sigma=0$} &\multicolumn{1}{c|}{SubGoals\footnotemark} &\multicolumn{1}{c|}{Myopic-Greedy} \\ \hline \hline
$0.2$ &-104.925&-62.872 &-64.7399 & -63.76 \\ \hline 
$0.4$ &-51.4522&-37.33  &-38.172  & -37.338 \\ \hline
$0.6$ &-33.4955&-26.444 &-27.232  & -26.666 \\ \hline 
$0.8$ &-24.3202&-20.584 &-20.852  & -20.704 \\ \hline
\end{tabular}
}
\caption{$C_{\Sigma}=-0.10$, $R_a=-1.0$.}
\label{GAvgJointUC01R1}
\end{small}
\end{table}
\footnotetext{The results are presented for the best $p$, found empirically.}

The Myopic-greedy approach attained utilities statistically significantly 
greater than those obtained by the heuristic case when $C_\Sigma=-0.1$ (see Table~\ref{GAvgJointUC01R1})\footnote{Statistical significance has been established with t-test.}.
When $C_\Sigma=-0.1$ and $P_u=0.4$, Myopic-greedy even attained utilities not significantly
different (with significance level 98\%) than Ideal.

\begin{table}[hbtp]
\begin{small}
\centerline{
\begin{tabular}{|l|l|l|l|l|l|}
\hline
&\multicolumn{5}{c|}{Average Joint Utility} \\ \cline{2-6}\multicolumn{1}{|c|}{$P_u$}
& \multicolumn{1}{|c|}{No-Comm.}&\multicolumn{1}{c|}{Ideal $C_\Sigma=0$} &\multicolumn{2}{c|}{Comm. SubGoals -- Best p} &\multicolumn{1}{c|}{Myopic-greedy} \\ \hline \hline
$0.2$ &-104.925 &{\bf -62.872} & -65.906\ \ \ \ \ \ \ \ \ \  &\ \ \ \ \ \ 0.3 &$$ -63.84\\ \hline
$0.4$ &-51.4522 &{\bf -37.33}  & -39.558\ \ \ \ \ \ \ \ \ \    &\ \ \ \ \ \ 0.2 &$$ -37.774 \\ \hline
$0.6$ &-33.4955  &{\bf -26.444}& -27.996\ \ \ \ \ \ \ \ \ \ &\ \ \ \ \ \ 0.2 & $$ -27.156 \\ \hline
$0.8$ &-24.3202 &{\bf -20.584} & -21.05\ \ \ \ \ \ \ \ \ \ &\ \ \ \ \ \ 0.1 &$$ -21.3  \\ \hline
\end{tabular}
}
\caption{$C_{\Sigma}=-1.0$ in SubGoals and Myopic-greedy, $R_a=-1.0$.}
\label{GAvgJointUC1R1}
\end{small}
\end{table}

\begin{table}[hbtp]
\begin{small}
\centerline{
\begin{tabular}{|l|l|l|l|l|l|}
\hline
&\multicolumn{5}{c|}{Average Joint Utility} \\ \cline{2-6}\multicolumn{1}{|c|}{$P_u$}
& \multicolumn{1}{|c|}{No-Comm.}&\multicolumn{1}{c|}{Ideal $C_\Sigma=0$} &\multicolumn{2}{c|}{Comm. SubGoals -- Best p} &\multicolumn{1}{c|}{Myopic-greedy}\\ \hline \hline
$0.2$ &-104.925 &{\bf -62.872} & -69.286\ \ \ \ \ \ \ \ \ \  &\ \ \ \ \ \ 0.1 &$$-68.948\\ \hline
$0.4$ &-51.4522 &{\bf -37.33}  & -40.516\ \ \ \ \ \ \ \ \ \    &\ \ \ \ \ \ 0.1 &$$ -40.594\\ \hline
$0.6$ &-33.4955  &{\bf -26.444}& -28.192\ \ \ \ \ \ \ \ \ \ &\ \ \ \ \ \ 0.1 & $$ -28.908\\ \hline
$0.8$ &-24.3202 &{\bf -20.584} & -21.118\ \ \ \ \ \ \ \ \ \ &\ \ \ \ \ \ 0.1 &$$ -22.166\\ \hline
\end{tabular}
}
\caption{$C_{\Sigma}=-10.0$ in SubGoals and Myopic-greedy, $R_a=-1.0$.}
\label{GAvgJointUC10R1}
\end{small}
\end{table}

When $C_\Sigma=-1$ (see Table~\ref{GAvgJointUC1R1}) the utilities attained for the 
Myopic-greedy approach when $P_u<0.8$ are significantly greater than the results obtained 
in the heuristic case. When $P_u=0.8$, the heuristic case was found
to be better than Myopic-greedy for the best choice of $p$ (Myopic-greedy obtained -21.3, and the 
SubGoals with $p=0.1$ attained -21.05 (variance=2.18)). 
The utilities attained by the Myopic-greedy agents, when $C_\Sigma=-10$ 
(see Table~\ref{GAvgJointUC10R1}) and 
$P_u$ in $\{0.2,0.4\}$, were not significantly different from the SubGoals case for 
the best $p$ with significance levels 61\% and 82\%, respectively. However, the 
heuristic case yielded smaller costs for the other values of $P_u =\{0.6, 0.8\}$.
One important point to notice is that these results consider the best $p$
found for the heuristic after trying a set of discrete values for p (see the x-axis
in Figure ~\ref{AvgUPr08}). In general trying and tuning  a heuristic parameter can be
very time consuming and the best choice may not be known ahead of time to the designer.
On the other hand, the Myopic-greedy approach does not require any tuning of any
parameter. In all the settings tested, Myopic-greedy always attain utilities higher 
than those attained in the SubGoals case with the worst $p$. 

\begin{table}[hbtp]
\begin{small}
\centerline{
\begin{tabular}{|l|l|l|l|l|}
\hline
&\multicolumn{4}{c|}{Average Communication Acts Performed} \\ \cline{2-5}\multicolumn{1}{|c|}{$P_u$}
& \multicolumn{1}{|c|}{No-Comm.}&\multicolumn{1}{c|}{Ideal $C_\Sigma=0$} &\multicolumn{1}{c|}{SubGoals} &\multicolumn{1}{c|}{Myopic-greedy} \\ \hline \hline
$0.2$ &0 &31.436 &5.4 &21.096 \\ \hline 
$0.4$ &0 &18.665 &1   &11.962 \\ \hline
$0.6$ &0 &13.426 &1   & 8.323 \\  \hline 
$0.8$ &0 &10.292 &1   & 4.579 \\ \hline
\end{tabular}
}
\caption{$C_{\Sigma}=-0.10$, $R_a=-1.0$.}
\label{GAvgCommC01R1}
\end{small}
\end{table}

\begin{table}[hbtp]
\begin{small}
\centerline{
\begin{tabular}{|l|l|l|l|l|}
\hline
&\multicolumn{4}{c|}{Average Communication Acts Performed} \\ \cline{2-5}\multicolumn{1}{|c|}{$P_u$}
& \multicolumn{1}{|c|}{No-Comm.}&\multicolumn{1}{c|}{Ideal $C_\Sigma=0$} &\multicolumn{1}{c|}{Comm. SubGoals} &\multicolumn{1}{c|}{Myopic-greedy}\\ \hline \hline
$0.2$ &0 &31.436 &1.194   &6.717\\ \hline
$0.4$ &0 &18.665 &1   &3.904    \\ \hline
$0.6$ &0 &13.426 &1   &2.036    \\  \hline
$0.8$ &0 &10.292 &0   &1.296    \\ \hline
\end{tabular}
}
\caption{$C_{\Sigma}=-1.0$ in Myopic-greedy and SubGoals, $R_a=-1.0$.}
\label{GAvgCommC1R1}
\end{small}
\end{table}

\begin{table}[hbtp]
\begin{small}
\centerline{
\begin{tabular}{|l|l|l|l|l|}
\hline
&\multicolumn{4}{c|}{Average Communication Acts Performed} \\ \cline{2-5}\multicolumn{1}{|c|}{$P_u$}
& \multicolumn{1}{|c|}{No-Comm.}&\multicolumn{1}{c|}{Ideal $C_\Sigma=0$} &\multicolumn{1}{c|}{Comm. SubGoals} &\multicolumn{1}{c|}{Myopic-greedy}\\ \hline \hline
$0.2$ &0 &31.436 &0   &0.416 \\ \hline
$0.4$ &0 &18.665 &0   &0.417 \\ \hline
$0.6$ &0 &13.426 &0   &0.338 \\  \hline
$0.8$ &0 &10.292 &0   &0.329 \\ \hline
\end{tabular}
}
\caption{$C_{\Sigma}=-10.0$ in Myopic-greedy and SubGoals, $R_a=-1.0$.}
\label{GAvgCommC10R1}
\end{small}
\end{table}
Tables~\ref{GAvgCommC01R1}, \ref{GAvgCommC1R1} and \ref{GAvgCommC10R1} present the average 
number of communication acts performed in each one of these cases.

\section{Discussion}
\label{discussion}

\begin{figure}[t]
\centerline{\includegraphics[height=3.4in]{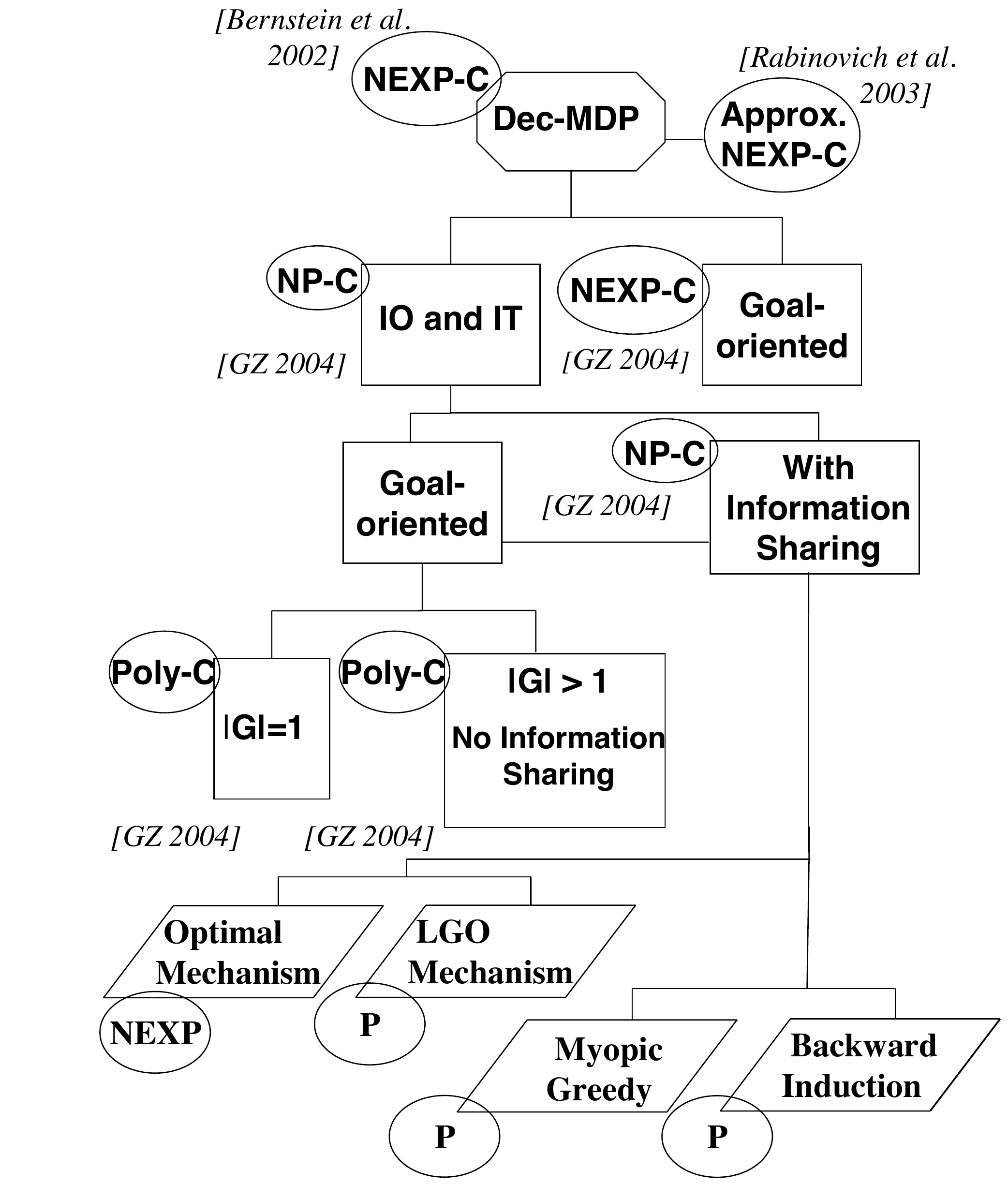}}
\caption{The complexity of solving Dec-MDPs.}
\label{resultsfig}
\end{figure}

Solving {\em optimally} decentralized control problems is known to be very hard. 
Figure~\ref{resultsfig} summarizes the complexity results (the rectangles
stand for optimal solutions while the parallelograms stand for solutions proposed in the framework
of communication-based decomposition mechanisms).
This taxonomy helps us understand the characteristics of different classes of
decentralized control problems and their effect on the {\em complexity} of these problems. 
The Coverage-set~\cite{Goldman04d}, Opt1Goal and OptNGoals~\cite{Goldman04c} were the first algorithms to solve optimally some non-trivial classes of Dec-MDPs. 

This paper presents communication-based decomposition mechanisms as a way 
to approximate the optimal joint solution of decentralized control problems.
This approach is based on two key ideas: (1) separating the questions of {\em when} to communicate 
and {\em what} to do between communications, and (2) exploiting the full observability of the global 
states after each communication to generate {\em individual} behaviors that the 
agents will follow between communications.
Communication between the decision makers serves as a synchronization point where local information is 
exchanged in order to assign an individual behavior to each controller. 
This addresses effectively applications in which constant communication is not desirable or not feasible. 
Many practical reasons could prevent agents from communication constantly.  Communication actions may
incur some costs that reflect the complexity of transmitting the information, the utilization of limited 
bandwidth that may be shared with other applications, or the risk of revealing information to competitive parties 
operating in the same environment.  Our communication-based decomposition mechanism divides the global problem into individual behaviors combined with communication acts to overcome the 
lack of global information.

We formalized communication-based decomposition mechanisms 
as a decentralized semi-Markov process with communication (Dec-SMDP-Com). 
We proved that solving optimally such problems with temporally abstracted actions is equivalent
to solving optimally a multi-agent MDP (MMDP).
We adapted the multi-step backup policy iteration algorithm to the 
decentralized case that can solve the Dec-SMDP-Com problem optimally. 
This algorithm produces the optimal communication-based decomposition mechanism. 
To provide a tractable algorithm, we can restrict the set of individual behaviors that are allowed. 
We proposed the $\mbox{LGO-MSBPI}$ polynomial-time algorithm that computes the assignment of a 
pair of local goals and period of time $k$ with the highest value to each possible global state. 
Adding local goals to the model seems more natural and intuitive than computing local behaviors based 
on general options. It is easier to state when a local behavior is completed when some local goal is reached, rather than stating sequences of local actions that eventually should achieve some desired
global behavior. Furthermore, an unrestricted set of options is larger and therefore it is computationally cheaper
to compute the decomposition mechanism when local goals are assumed. This intuition is confirmed by our experiments.  But the general question remain open, namely to determine when it is beneficial to compute local behaviors out of general options rather than assuming local goals.

The paper also presents a simpler approximation method. It assumes that a certain 
mechanism is given, i.e., human knowledge is incorporated into the model to provide 
agents with individual policies of actions (not including communication acts).
A greedy-approach is presented that computes the best time to communicate 
assuming there is only one opportunity for exchanging information.  The paper concludes with an empirical assessment of these approaches.

In summary, this paper contributes a communication-based
decomposition mechanism that can be applied to many of the hard
decentralized control problems shown in Figure~\ref{resultsfig}. 
This approach enables us to compute tractable individual 
behaviors for each agent together with the most
beneficial time to communicate and change these local behaviors.
The analytical results in the paper support the validity of the approach with respect to 
Dec-MDPs with independent transitions and observations. 
However, it is straightforward to apply the approach to general Dec-POMDPs or Dec-MDPs with dependent transitions or observations, and we believe it offers a viable approximation technique for these problems as well.
Our approach is scalable with respect to the number of agents since all the complexity results presented will increase linearly as more agents are added to the system.  Exchange of information is assumed to be via a broadcast to all the other agents. An interesting future extension is to study how agents can efficiently choose partners for communication to avoid global broadcasting.

\acks{This work was supported in part by the National Science Foundation
under grants IIS-0219606, by the Air Force Office of Scientific Research
under grants F49620-03-1-0090 and FA9550-08-1-0181, and
by NASA under grant NCC 2-1311.
Any opinions, findings, and conclusions or recommendations
expressed in this material are those of the authors and do not reflect
the views of the NSF, AFOSR or NASA.
}

\bibliography{my}

\begin{thebibliography}{}

\bibitem[\protect\BCAY{Becker, Zilberstein, Lesser,\ \BBA\ Goldman}{Becker
  et~al.}{2004}]{Goldman04d}
Becker, R., Zilberstein, S., Lesser, V., \BBA\ Goldman, C.~V. \BBOP2004\BBCP.
\newblock \BBOQ Solving transition independent decentralized {MDP}s\BBCQ\
\newblock {\Bem Journal of Artificial Intelligence Research}, {\Bem 22},
  423--455.

\bibitem[\protect\BCAY{Bererton, Gordon, Thrun,\ \BBA\ Khosla}{Bererton
  et~al.}{2003}]{Bererton03}
Bererton, C., Gordon, G., Thrun, S., \BBA\ Khosla, P. \BBOP2003\BBCP.
\newblock \BBOQ Auction mechanism design for multi-robot coordination\BBCQ\
\newblock In {\Bem Proceedings of the Seventeenth Annual Conference on Neural
  Information Processing Systems}, Whistler, BC, Canada.

\bibitem[\protect\BCAY{Bernstein, Givan, Immerman,\ \BBA\
  Zilberstein}{Bernstein et~al.}{2002}]{Bernstein02}
Bernstein, D., Givan, R., Immerman, N., \BBA\ Zilberstein, S. \BBOP2002\BBCP.
\newblock \BBOQ The complexity of decentralized control of {M}arkov decision
  processes\BBCQ\
\newblock {\Bem Mathematics of Operations Research}, {\Bem 27\/}(4), 819--840.

\bibitem[\protect\BCAY{Boutilier}{Boutilier}{1999}]{Boutilier99}
Boutilier, C. \BBOP1999\BBCP.
\newblock \BBOQ Sequential optimality and coordination in multiagent
  systems\BBCQ\
\newblock In {\Bem Proceedings of the Sixteenth International Joint Conference
  on Artificial Intelligence}, \BPGS\ 478--485, Stockholm, Sweden.

\bibitem[\protect\BCAY{Ghavamzadeh\ \BBA\ Mahadevan}{Ghavamzadeh\ \BBA\
  Mahadevan}{2004}]{Mohammed04}
Ghavamzadeh, M.\BBACOMMA\  \BBA\ Mahadevan, S. \BBOP2004\BBCP.
\newblock \BBOQ Learning to act and communicate in cooperative multiagent
  systems using hierarchical reinforcement learning\BBCQ\
\newblock In {\Bem Proceedings of the Third International Joint Conference on
  Autonomous Agents and Multi-Agent Systems}, \BPGS\ 1114--1121, New York City,
  NY.

\bibitem[\protect\BCAY{Goldman\ \BBA\ Zilberstein}{Goldman\ \BBA\
  Zilberstein}{2003}]{Goldman03a}
Goldman, C.~V.\BBACOMMA\  \BBA\ Zilberstein, S. \BBOP2003\BBCP.
\newblock \BBOQ Optimizing information exchange in cooperative multi-agent
  systems\BBCQ\
\newblock In {\Bem Proceedings of the Second International Joint Conference on
  Autonomous Agents and Multi-Agent Systems}, \BPGS\ 137--144, Melbourne,
  Australia.

\bibitem[\protect\BCAY{Goldman\ \BBA\ Zilberstein}{Goldman\ \BBA\
  Zilberstein}{2004a}]{Goldman04c}
Goldman, C.~V.\BBACOMMA\  \BBA\ Zilberstein, S. \BBOP2004a\BBCP.
\newblock \BBOQ Decentralized control of cooperative systems: Categorization
  and complexity analysis\BBCQ\
\newblock {\Bem Journal of Artificial Intelligence Research}, {\Bem 22},
  143--174.

\bibitem[\protect\BCAY{Goldman\ \BBA\ Zilberstein}{Goldman\ \BBA\
  Zilberstein}{2004b}]{Goldman04e}
Goldman, C.~V.\BBACOMMA\  \BBA\ Zilberstein, S. \BBOP2004b\BBCP.
\newblock \BBOQ Goal-oriented {Dec-MDP}s with direct communication\BBCQ\
\newblock Technical Report\ 04--44, University of Massachusetts at Amherst,
  Computer Science Department.

\bibitem[\protect\BCAY{Guestrin\ \BBA\ Gordon}{Guestrin\ \BBA\
  Gordon}{2002}]{Guestrin02}
Guestrin, C.\BBACOMMA\  \BBA\ Gordon, G. \BBOP2002\BBCP.
\newblock \BBOQ Distributed planning in hierarchical factored {MDP}s\BBCQ\
\newblock In {\Bem Proceedings of the Eighteenth Conference on Uncertainty in
  Artificial Intelligence}, \BPGS\ 197--206, Edmonton, Canada.

\bibitem[\protect\BCAY{Guestrin, Koller,\ \BBA\ Parr}{Guestrin
  et~al.}{2001}]{Guestrin01}
Guestrin, C., Koller, D., \BBA\ Parr, R. \BBOP2001\BBCP.
\newblock \BBOQ Multiagent planning with factored {MDP}s\BBCQ\
\newblock In {\Bem Advances in Neural Information Processing Systems}, \BPGS\
  1523--1530, Vancouver, British Columbia.

\bibitem[\protect\BCAY{Hansen}{Hansen}{1997}]{HansenTechRep97}
Hansen, E.~A. \BBOP1997\BBCP.
\newblock \BBOQ {M}arkov decision processes with observation costs\BBCQ\
\newblock Technical Report\ 97-01, University of Massachusetts at Amherst,
  Computer Science Department.

\bibitem[\protect\BCAY{Hansen\ \BBA\ Zhou}{Hansen\ \BBA\ Zhou}{2003}]{Hansen03}
Hansen, E.~A.\BBACOMMA\  \BBA\ Zhou, R. \BBOP2003\BBCP.
\newblock \BBOQ Synthesis of hierarchical finite-state controllers for
  {POMDP}s\BBCQ\
\newblock In {\Bem Proceedings of the Thirteenth International Conference on
  Automated Planning and Scheduling}, Trento, Italy.

\bibitem[\protect\BCAY{Nair, Tambe, Roth,\ \BBA\ Yokoo}{Nair
  et~al.}{2004}]{Nair04}
Nair, R., Tambe, M., Roth, M., \BBA\ Yokoo, M. \BBOP2004\BBCP.
\newblock \BBOQ Communication for improving policy computation in distributed
  {POMDP}s\BBCQ\
\newblock In {\Bem Proceedings of the Third International Joint Conference on
  Autonomous Agents and Multi-Agent Systems}, \BPGS\ 1098--1105, New York City,
  NY.

\bibitem[\protect\BCAY{Nair, Tambe, Yokoo, Pynadath,\ \BBA\ Marsella}{Nair
  et~al.}{2003}]{Nair03}
Nair, R., Tambe, M., Yokoo, M., Pynadath, D., \BBA\ Marsella, S.
  \BBOP2003\BBCP.
\newblock \BBOQ Taming decentralized {POMDP}s: Towards efficient policy
  computation for multiagent settings\BBCQ\
\newblock In {\Bem Proceedings of the Eighteenth International Joint Conference
  on Artificial Intelligence}, \BPGS\ 705--711, Acapulco, Mexico.

\bibitem[\protect\BCAY{Papadimitriou\ \BBA\ Tsitsiklis}{Papadimitriou\ \BBA\
  Tsitsiklis}{1987}]{Papadimitriou87}
Papadimitriou, C.~H.\BBACOMMA\  \BBA\ Tsitsiklis, J. \BBOP1987\BBCP.
\newblock \BBOQ The complexity of {M}arkov decision processes\BBCQ\
\newblock {\Bem Mathematics of Operations Research}, {\Bem 12\/}(3), 441--450.

\bibitem[\protect\BCAY{Peshkin, Kim, Meuleau,\ \BBA\ Kaelbling}{Peshkin
  et~al.}{2000}]{Kaelbling00}
Peshkin, L., Kim, K.-E., Meuleau, N., \BBA\ Kaelbling, L.~P. \BBOP2000\BBCP.
\newblock \BBOQ Learning to cooperate via policy search\BBCQ\
\newblock In {\Bem Proceedings of the Sixteenth Conference on Uncertainty in
  Artificial Intelligence}, \BPGS\ 489--496, Stanford, CA.

\bibitem[\protect\BCAY{Petrik\ \BBA\ Zilberstein}{Petrik\ \BBA\
  Zilberstein}{2007}]{Petrik07}
Petrik, M.\BBACOMMA\  \BBA\ Zilberstein, S. \BBOP2007\BBCP.
\newblock \BBOQ Anytime coordination using separable bilinear programs\BBCQ\
\newblock In {\Bem Proceedings of the Twenty Second Conference on Artificial
  Intelligence}, \BPGS\ 750--755, Vancouver, BC, Canada.

\bibitem[\protect\BCAY{Puterman}{Puterman}{1994}]{Puterman94}
Puterman, M.~L. \BBOP1994\BBCP.
\newblock {\Bem Markov Decision Processes -- Discrete Stochastic Dynamic
  Programming}.
\newblock Wiley \& Sons, Inc., New York.

\bibitem[\protect\BCAY{Pynadath\ \BBA\ Tambe}{Pynadath\ \BBA\
  Tambe}{2002}]{Tambe02}
Pynadath, D.~V.\BBACOMMA\  \BBA\ Tambe, M. \BBOP2002\BBCP.
\newblock \BBOQ The communicative multiagent team decision problem: Analyzing
  teamwork theories and models\BBCQ\
\newblock {\Bem Journal of Artificial Intelligence Research}, {\Bem 16},
  389--423.

\bibitem[\protect\BCAY{Schneider, Wong, Moore,\ \BBA\ Riedmiller}{Schneider
  et~al.}{1999}]{Schneider99}
Schneider, J., Wong, W.-K., Moore, A., \BBA\ Riedmiller, M. \BBOP1999\BBCP.
\newblock \BBOQ Distributed value functions\BBCQ\
\newblock In {\Bem Proceedings of the Sixteenth International Conference on
  Machine Learning}, \BPGS\ 371--378, Bled, Slovenia.

      
  \bibitem[\protect\BCAY{Seuken\ \BBA\ Zilberstein}{Seuken\ \BBA\
    Zilberstein}{2007b}]{Seuken07b}
  Seuken, S.\BBACOMMA\  \BBA\ Zilberstein, S. \BBOP2007a\BBCP.
  \newblock \BBOQ Improved memory-bounded dynamic programming for
    {Dec-POMDP}s\BBCQ\
  \newblock In {\Bem Proceedings of the Twenty Third Conference on Uncertainty in
    Artificial Intelligence}, Vancouver, BC, Canada.

    
\bibitem[\protect\BCAY{Seuken\ \BBA\ Zilberstein}{Seuken\ \BBA\
  Zilberstein}{2007a}]{Seuken07a}
Seuken, S.\BBACOMMA\  \BBA\ Zilberstein, S. \BBOP2007b\BBCP.
\newblock \BBOQ Memory-bounded dynamic programming for {Dec-POMDP}s\BBCQ\
\newblock In {\Bem Proceedings of the Twentieth International Joint Conference
  on Artificial Intelligence}, \BPGS\ 2009--2015, Hyderabad, India.

\bibitem[\protect\BCAY{Seuken\ \BBA\ Zilberstein}{Seuken\ \BBA\
  Zilberstein}{2008}]{Seuken08}
Seuken, S.\BBACOMMA\  \BBA\ Zilberstein, S. \BBOP2008\BBCP.
\newblock \BBOQ Formal models and algorithms for decentralized decision making
  under uncertainty\BBCQ\
\newblock {\Bem Autonomous Agents and Multi-agent Systems}.
\newblock To appear.

\bibitem[\protect\BCAY{Sutton, Precup,\ \BBA\ Singh}{Sutton
  et~al.}{1999}]{Sutton99}
Sutton, R.~S., Precup, D., \BBA\ Singh, S. \BBOP1999\BBCP.
\newblock \BBOQ Between {MDP}s and {semi-MDP}s: A framework for temporal
  abstraction in reinforcement learning\BBCQ\
\newblock {\Bem Artificial Intelligence}, {\Bem 112}, 181--211.

\bibitem[\protect\BCAY{Suzuki\ \BBA\ Yamashita}{Suzuki\ \BBA\
  Yamashita}{1999}]{Suzuki99}
Suzuki, I.\BBACOMMA\  \BBA\ Yamashita, M. \BBOP1999\BBCP.
\newblock \BBOQ Distributed anonymous mobile robots: Formation of geometric
  patterns\BBCQ\
\newblock {\Bem SIAM Journal on Computing}, {\Bem 28\/}(4), 1347--1363.

\bibitem[\protect\BCAY{Wolpert, Wheeler,\ \BBA\ Tumer}{Wolpert
  et~al.}{1999}]{Wolpert99}
Wolpert, D.~H., Wheeler, K.~R., \BBA\ Tumer, K. \BBOP1999\BBCP.
\newblock \BBOQ General principles of learning-based multi-agent systems\BBCQ\
\newblock In {\Bem Proceedings of the Third International Conference on
  Autonomous Agents}, \BPGS\ 77--83, Seattle, Washington.

\bibitem[\protect\BCAY{Xuan, Lesser,\ \BBA\ Zilberstein}{Xuan
  et~al.}{2001}]{Ping01}
Xuan, P., Lesser, V., \BBA\ Zilberstein, S. \BBOP2001\BBCP.
\newblock \BBOQ Communication decisions in multi-agent cooperation: Model and
  experiments\BBCQ\
\newblock In {\Bem Proceedings of the Fifth International Conference on
  Autonomous Agents}, \BPGS\ 616--623, Montreal, Canada.

\end{thebibliography}
\bibliographystyle{theapa}

\end{document}